\newcommand*{\CVPR}{}

\newcommand*{\CAMREADY}{}

\ifdefined\NIPS
	\documentclass{article} 
	\usepackage{nips14submit_e,times}
	\usepackage{hyperref}
	\usepackage{url}
\fi
\ifdefined\CVPR
	\documentclass[10pt,twocolumn,letterpaper]{article}
	\usepackage{cvpr}
	\usepackage{times}
	\usepackage{epsfig}
	\usepackage{graphicx}
	\usepackage{amsmath}
	\usepackage{amssymb}
\fi
\ifdefined\AISTATS
	\documentclass[twoside]{article}
	\ifdefined\CAMREADY
		\usepackage[accepted]{aistats2015}
	\else
		\usepackage{aistats2015}
	\fi
	\usepackage{url}
\fi

\usepackage{color}
\usepackage{amsfonts}
\usepackage{amsmath}
\usepackage{algorithm}
\usepackage{algorithmic}
\usepackage{graphicx}
\usepackage{bbm}
\usepackage{amsthm}
\ifdefined\NIPS
	\usepackage[square,comma,numbers]{natbib}
\fi
\ifdefined\CVPR
	\usepackage[square,comma,numbers]{natbib}
\fi
\ifdefined\AISTATS
	\usepackage[round,authoryear]{natbib}
\fi

\newtheorem{definition}{Definition}
\newtheorem{lemma}{Lemma}

\newtheorem{theorem}{Theorem}

\def\be   {\begin{equation}}
\def\ee   {\end{equation}}
\def\beas {\begin{eqnarray*}}
\def\eeas {\end{eqnarray*}}
\def\bea {\begin{eqnarray}}
\def\eea {\end{eqnarray}}
\newcommand{\h} {{\mathbf h}}
\newcommand{\x} {{\mathbf x}}

\newcommand{\z} {{\mathbf z}}
\newcommand{\uu} {{\mathbf u}}
\newcommand{\vv} {{\mathbf v}}
\newcommand{\w}{{\mathbf w}}

\newcommand{\HH} {{\mathcal H}}
\newcommand{\X} {{\mathcal X}}

\newcommand{\C} {{\mathbb C}}
\newcommand{\R} {{\mathbb R}}
\newcommand{\N} {{\mathbb N}}
\newcommand{\1} {{\mathbf 1}}
\newcommand{\0} {{\mathbf 0}}
\newcommand{\K} {{\mathbf K}}
\newcommand{\Z} {{\mathbf Z}}
\newcommand{\alphabf}     {{\boldsymbol{\alpha}}}

\newcommand{\abs}[1]      {\lvert#1 \rvert}
\newcommand{\norm}[1]   {\left\|#1 \right\|}

\newcommand{\inprod}[2]  {\left\langle{#1},{#2}\right\rangle}
\DeclareMathOperator*{\argmax}{argmax}

\newcommand{\mexu}[2]  {\underset{#2}{MEX_{#1}}}

\ifdefined\CVPR
	\usepackage[breaklinks=true,bookmarks=false]{hyperref}
	\ifdefined\CAMREADY
		\cvprfinalcopy 
	\fi
	
\fi
\ifdefined\NIPS

	\ifdefined\CAMREADY
		\nipsfinalcopy
	\fi
\fi

\begin{document}

\ifdefined\NIPS
	\title{SimNets: A Generalization of Convolutional Networks}
	\author{
	Nadav Cohen \\
	The Hebrew University of Jerusalem \\
	\texttt{cohennadav@cs.huji.ac.il} \\
	\And 
	Amnon Shashua \\
	The Hebrew University of Jerusalem \\
	\texttt{shashua@cs.huji.ac.il} \\
	}
	\maketitle
\fi
\ifdefined\CVPR
	\title{SimNets: A Generalization of Convolutional Networks}
	\author{
	Nadav Cohen \\
	The Hebrew University of Jerusalem \\
	\texttt{cohennadav@cs.huji.ac.il} \\
	\and 
	Amnon Shashua \\
	The Hebrew University of Jerusalem \\
	\texttt{shashua@cs.huji.ac.il} \\
	}
	\maketitle
\fi
\ifdefined\AISTATS
	\twocolumn[
	\aistatstitle{SimNets: A Generalization of Convolutional Networks}
	\ifdefined\CAMREADY
		\aistatsauthor{Nadav Cohen \And Amnon Shashua}
		\aistatsaddress{The Hebrew University of Jerusalem \And The Hebrew University of Jerusalem}
	\else
		\aistatsauthor{Anonymous Author 1 \And Anonymous Author 2}
		\aistatsaddress{Unknown Institution 1 \And Unknown Institution 2}
	\fi
	]	
\fi

\begin{abstract}
We present a deep layered architecture that generalizes classical convolutional neural networks (ConvNets).  The architecture, called SimNets, is driven by two operators, one being a similarity function whose family contains the convolution operator used in ConvNets, and the other is a new soft max-min-mean operator called MEX that realizes classical operators like ReLU and max pooling, but has additional capabilities that make SimNets a powerful generalization of ConvNets.  Three interesting properties emerge from the architecture: (i) the basic input to hidden layer to output machinery contains as special cases kernel machines with the Exponential and Generalized Gaussian kernels, the output units being "neurons in feature space" (ii) in its general form, the basic machinery has a higher abstraction level than kernel machines, and (iii) initializing networks using unsupervised learning is natural.  Experiments demonstrate the capability of achieving state of the art accuracy with networks that are an order of magnitude smaller than comparable ConvNets.
\end{abstract}

\section{Introduction}
Convolutional neural networks (ConvNets) are attracting much attention largely due to their impressive empirical performance on large scale visual recognition tasks (c.f.~\cite{krizhevsky2012imagenet, zeiler2014visualizing, sermanet2013overfeat, taigman2013deepface, szegedy2014going}).  The ConvNet architecture has the capacity to model large learning problems that include thousands of categories, while employing prior knowledge embedded into the architecture.  The ConvNet capacity is controlled by varying the number of layers (depth), the size of each layer (breadth), and the size of the convolutional windows (which in turn are based on assumptions on local image statistics).  The learning capacity is controlled using over-specified networks (networks that are larger than necessary in order to model the problem), followed by various forms of regularization techniques such as Dropout~(\cite{hinton2012improving}).  

Despite their success in recent years, ConvNets still fall short of reaching the holy grail of human-level visual recognition performance.  Scaling up to such performance levels could take more than merely dialing up network sizes while relying on prior knowledge to compensate for what we cannot learn.  It may be worthwhile to challenge the basic ConvNet architecture, in order to obtain more compact networks for the same level of accuracy, or in other words, in order to increase the abstraction level of the basic network operations.  

A few observations have motivated our work.  The first is that the ConvNet architecture has not changed much since its early introduction in the 1980s~(\cite{lecun1995convolutional}) -- there were some attempts to create other types of deep-layered architectures (cf.~\cite{riesenhuber1999hierarchical, bruna2013invariant, poon2011sum}), but these are not commonly used compared to ConvNets.  Arguably, the empirical success that ConvNets have witnessed in recent years is mainly fueled by the ever-growing scale of available computing power and training data, with the contribution of algorithmic advances having secondary importance.  Our second observation is that although there were attempts to use unsupervised learning to initialize ConvNets (c.f.~\cite{hinton2006fast, bengio2007greedy, vincent2008extracting}), it has since been observed that these schemes have little to no advantage over carefully selected random initializations that do not use data at all (see for example~\cite{krizhevsky2012imagenet,zeiler2014visualizing,sermanet2013overfeat}).  We nevertheless believe that unsupervised initialization has an important role in scaling up the capacity of deep learning, and therefore find interest in deep architectures that give rise to natural initializations using unsupervised data.  The third observation that motivated our work is that the ConvNet learning paradigm completely took over classification engines developed in the 1990s like Support Vector Machines (SVM) and kernel machines in general.  These machine learning methods were well suited for ``flat'' architectures, and while attempts to apply them to deep layered architectures have been made~(\cite{cho2009kernel}), they did not keep up with the performance levels of the layered ConvNet architecture.  It may be beneficial to develop a deep architecture that includes the body of work on kernel machines, but which still has the capacity to model large learning problems like the ConvNet architecture.

In this paper we introduce a new family of layered networks we call SimNets (similarity networks).  The general idea is to ``lift'' the classical ConvNet architecture into something more general, a multilayer kernel network architecture, which carries several attractive features.  First, the architecture bridges the decades-old ConvNets with the statistical learning machinery of the last decade or so.  Second, it provides a higher level of abstraction than the convolutional and pooling layers of ConvNets, thus potentially providing more compact networks for the same level of accuracy.  Third, the architecture is endowed with a natural initialization based on unlabeled data, which also has the potential for determining the number of channels in each layer based on variance analysis of patterns generated from the previous layer.  In other words, the structure of a SimNet can potentially be determined automatically from (unlabeled) training data.

The SimNet architecture is based on two operators.  The first is analogous to, and generalizes, the convolutional operator in ConvNets.  The second, as special cases, plays the role of ReLU activation~(\cite{nair2010rectified}) and max pooling in ConvNets, but in addition, has capabilities that make SimNets much more than ConvNets.  In a set of limited experiments on CIFAR-10 dataset~(\cite{krizhevsky2009learning}) using a small number of layers, we achieved better or comparable performance to state of the art ConvNets with the same number of layers, and the specialized network studied in~\cite{coates2011analysis}, using 1/9 and 1/5, respectively, of the number of learned parameters.  

In the following sections, we introduce the two operators that the SimNet architecture comprises, and describe its special cases and properties.  The experiments section is still preliminary but demonstrates the power of SimNets and their potential for high capacity learning.  Additional experiments with deeper SimNets are underway, but those require extensive optimization and coding infrastructure in order to apply to large scale settings.

\section{The SimNet architecture } \label{sec:simnet_arch}
The SimNet architecture consists of two operators -- a ``similarity" operator that generalizes the inner-product operator found in ConvNets, and a soft max-average-min operator called MEX that replaces the ConvNet ReLU activation~(\cite{nair2010rectified}) and max/average pooling layers, and allows additional capabilities as will be described below.

The similarity operator matches an input $\x\in\R^d$ with a template $\z\in\R^d$ and a weight vector $\uu\in\R_+^d$ ($\R_+^d$ stands for the non-negative orthant of $\R^d$) through $\uu^\top\phi(\x,\z)$, where $\phi:\R^d\times\R^d\to\R^d$ is a similarity mapping.  We will consider two forms of similarity mappings: the ``linear'' form $\phi(\x,\z)_i=x_i z_i$, such that when setting $\uu=\1$ we obtain an inner-product operator, and the ``$l_p$'' form $\phi(\x,\z)_i=-\abs{x_i-z_i}^p$ defined for $p>0$.  

In a layered architecture, a similarity layer is illustrated in fig.~\ref{fig:layers_nets}(a), where the similarity operator is applied to patches $\x_{ij}\in\R^{hwD}$ of width $w$, height $h$ and depth $D$, with the indexes $(i,j)$ describing the location of the patch within the layer's input.  Given $n$ templates $\z_1,...,\z_n\in\R^{hwD}$ and weights $\uu_1,...,\uu_n\in\R_+^{hwD}$, the layer's output at coordinates $(i,j,l)$ becomes $\uu_l^\top\phi_l(\x_{ij},\z_l)$, where we use index $l$ in $\phi_l$ to indicate that the similarity mapping may differ across channels.  As customary with ConvNets, the width and height of the layer's output depends on the ``stride'' setting, which determines the step-size between input patches, e.g. with horizontal and vertical strides of $s$, the spatial dimensions of the output become $\lfloor(H-h)/s\rfloor+1$ and $\lfloor(W-w)/s\rfloor+1$.  Note that using the linear-similarity mapping with unit weights($\uu_l=\1$) reduces the similarity layer to a standard convolutional layer where $\z_l$ are the convolution kernels, whereas for $l_p$-similarity with fixed $p=2$, the output at coordinates $(i,j,l)$ measures the weighted Euclidean (Mahalanobis) distance between the input patch $\x_{ij}$ and the template $\z_l$ with every pixel weighted through the entries of the weight vector $\uu_l$.  When using $l_p$-similarity in general, fixing the order $p$ is not obligatory -- the order can be learned based on training data, either globally or independently for each output channel.  

We will see later on that, when setting unit weights, the (unweighted) linear and $l_p$ similarity mappings correlate with kernel-SVM methods of statistical machine learning (through special cases of the SimNet architecture), and that the view of $\z_l$ as templates allows natural unsupervised initialization of networks using conventional statistical estimation methods.

The MEX operator, whose name stands for Maximum-minimum-Expectation Collapsing Smooth (with ``CS'' pronounced as ``X''), is responsible for the role of activation functions, max or average pooling (both spatially and across channels), and weights necessary for classification.  The operator is defined as follows:
\be
\mexu{\xi}{i=1,...,n}\{c_i\}:=\frac{1}{\xi}\log\left(\frac{1}{n}\sum_{i=1}^n\exp\{\xi{\cdot}c_i\}\right)
\label{eqn:mex_def}
\ee
with the alternative notation $MEX_\xi\{c_i\}_{i=1}^n$ used interchangeably.  The parameter $\xi\in\R$ spans a continuum between maximum, expectation (mean), and minimum:
\beas
MEX_\xi\{c_i\}_{i=1}^n&\underset{\xi\to+\infty}{\longrightarrow}&\text{max}\{c_i\}_{i=1}^n  \\
MEX_\xi\{c_i\}_{i=1}^n&\underset{\xi\to0}{\longrightarrow}       &\text{mean}\{c_i\}_{i=1}^n \\
MEX_\xi\{c_i\}_{i=1}^n&\underset{\xi\to-\infty}{\longrightarrow} &\text{min}\{c_i\}_{i=1}^n 
\eeas
Moreover, for a given value of $\xi$, the operator is smooth and exhibits the ``collapsing'' property defined below:
\bea 
&MEX_\xi\{ MEX_\xi\{c_{ij}\}_{1\leq j\leq m}\}_{1\leq i\leq n}&  \nonumber \\
&=MEX_\xi\{c_{ij}\}_{1\leq j\leq m,1\leq i\leq n}& \label{eqn:mex_collapse}
\eea

In a layered architecture, the MEX operator is used to define the MEX layer -- see illustration in fig.~\ref{fig:layers_nets}(b). In the MEX layer, the input is divided into (possibly overlapping) blocks, each mapped to a single output element.  The output value associated with the $t$'th input block is given by:
$$out(t)=MEX_{\xi_t}\left\{\left\{inp(s)+b_{ts}\right\}_{s\in block(t)}, c_t\right\}$$
where the index $s$ runs though the input block, the offsets $b_{ts}\in\R$ serve various roles as will be described later, and $c_t\in\R$ are optional (may or may not be used).  The MEX layer can realize two standard ConvNet layers -- the ReLU activation and the max-pooling layer.  To realize ReLU activation, one should set the input blocks to be single entries, have the output dimensions equal to the input dimensions, set $b_{ts}=0,c_t=0$, and let ${\xi_t}\rightarrow+\infty$, and as a result $out(t)=\max\{inp(t),0\}$ as required (see fig.~\ref{fig:layers_nets}(c)).  To realize a max-pooling layer, set the input blocks to cover a 2D area, set the depth of the output equal to that of the input, set $b_{ts}=0$, omit $c_t$, and set ${\xi_t}\rightarrow+\infty$.  As a result $out(i,j,l)=\max\{inp(i',j',l)\}_{(i',j')\in pool(i,j)}$ (see fig.~\ref{fig:layers_nets}(d)).  Note that by setting ${\xi_t}\rightarrow 0$ one obtains an average-pooling layer, and moreover, the parameters $\xi_t$ can be learned (optimized) as part of the training process, allowing additional flexibility.

To recap, the layers corresponding to the two operators of the SimNet architecture -- similarity and MEX, can realize conventional ConvNets as follows:
\begin{itemize}
\item \emph{Convolutional layer}: use similarity layer with linear form $\phi_l(\x,\z)_i=x_i z_i$ and unit weights $\uu_l=\1$.
\item \emph{ReLU activation}: use MEX layer with $b_{ts}=0,c_t=0,\xi_t\rightarrow+\infty$ and single-entry input blocks. 
\item \emph{Max pooling layer}: use MEX layer with $b_{ts}=0,\xi_t\rightarrow+\infty$, $c_t$ omitted, and 2D input blocks.
\item \emph{Dense layer}: use similarity layer with the entire input as the only patch, linear form $\phi_l(\x,\z)_i=x_i z_i$ and unit weights $\uu_l=\1$.
\end{itemize}

Next, we make wider use of the two SimNet layers, taking us beyond classical ConvNets, exploring connections to classical statistical learning models with kernel machines.

\begin{figure*}
\includegraphics[width=\textwidth,height=\textheight,keepaspectratio]{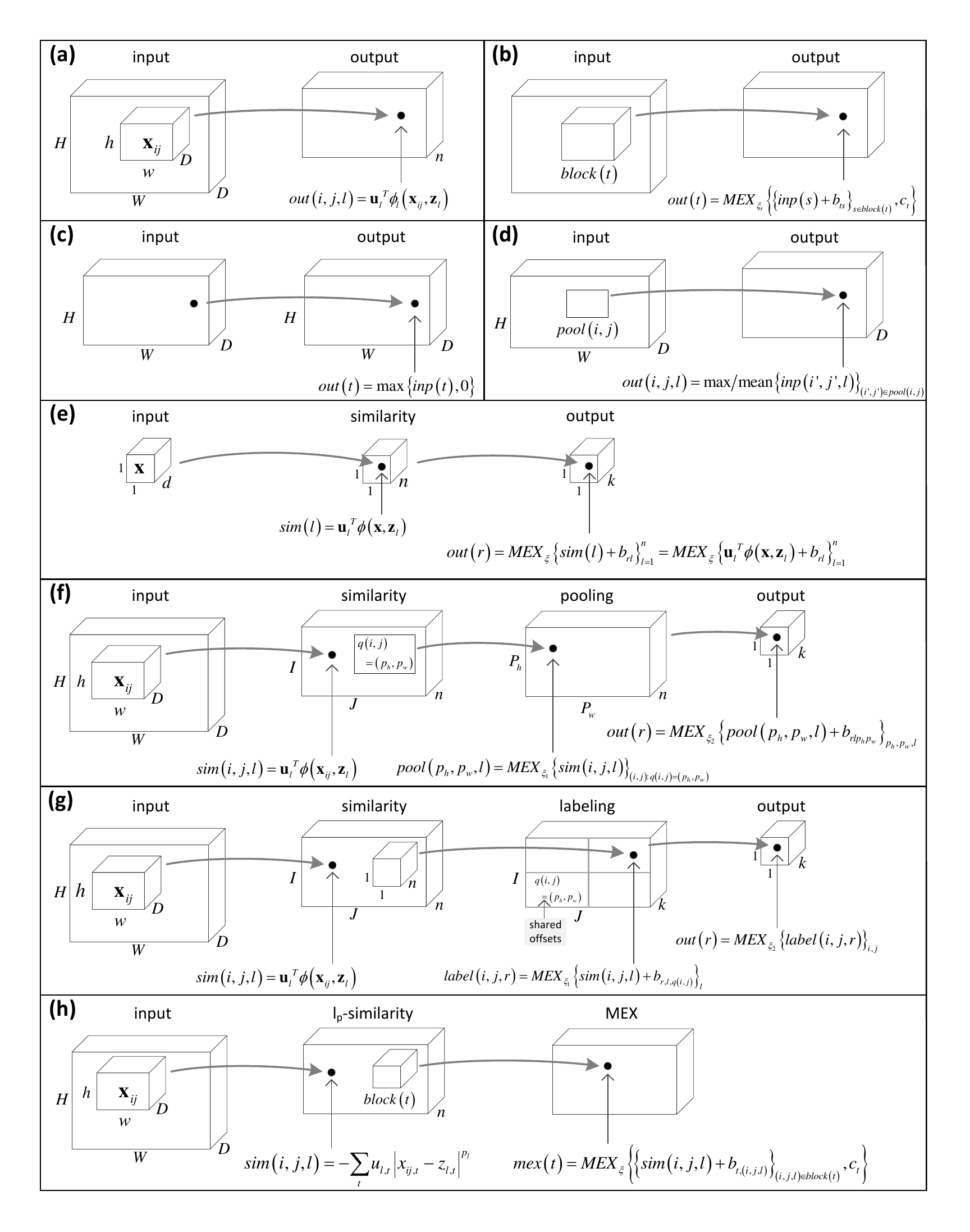}
\vspace{-1cm}
\caption{\footnotesize (a)~SimNet similarity layer~~(b)~SimNet MEX layer~~(c)~ConvNet ReLU activation layer~~(d)~ConvNet max/average pooling layer~~(e)~SimNet MLP with multiple outputs~~(f)~SimNet with locality, sharing and pooling~~(g)~Patch-labeling SimNet~~(h)~SimNet $l_p$-similarity layer followed by MEX layer}
\label{fig:layers_nets}
\end{figure*}

\section{SimNets and kernel machines} \label{sec:simnets_ksvm}
So far, we set the architectural choices of SimNets to realize classical ConvNets, which form a rudimentary special case of the available possibilities.  In particular, we did not make use of the $l_p$ similarity, and of the offsets $\{b_{ts}\}$ in the MEX layer.  In the following subsection, we consider a ``multi-layer perceptron'' (MLP) construction consisting of a single hidden layer in addition to input and output layers.  In the subsection that follows,  we will study the case where the input layer is processed by patches, the hidden layer involves locality and sharing, and a pooling operation follows the hidden layer -- a structure prevalent in classical ConvNets.
\subsection{MLP analogy: input $\rightarrow$ hidden layer $\rightarrow$ output} \label{subsec:mlp}
The Similarity and MEX operators give straightforward generalizations of the convolution and max/average pooling layers in ConvNets.  As we now show, they create something of greater consequence when applied one after the other in succession.  To make the point as succinctly as possible, consider a MLP construction consisting of $d$ input nodes (making up the input vector $\x\in\R^d$), $n$ hidden units, and a single output unit.  The value $h(\x)$ of the output unit is a result of a mapping $\R^d\to\R$, defined by the two SimNet operators applied in succession ($n$ similarity operators with different templates and shared mapping $\phi$, followed by MEX with offsets):
\be
h(\x)=MEX_\xi\left\{\uu_l^\top\phi(\x,\z_l)+b_l\right\}_{l=1,...,n} \label{eqn:hx}
\ee
A straightforward analogy to existing work is obtained by setting unit weights $\uu=1$, linear similarity $\phi(\x,\z)_i=x_i z_i$ and $\xi\to\infty$, resulting in $h(\x)=\max\left\{\z_l^\top\x+b_l\right\}_{l=1}^n$ -- a maxout operation~\cite{goodfellow2013maxout}.  There were other attempts to generalize maxout, notably the recently proposed $L_p$ unit~\cite{gulcehre2014learned}, which is defined by $\left(\frac{1}{n}\sum_{l=1}^n\abs{\z_l^\top\x+b_l}^p\right)^{1/p}$.  When $p\to\infty$, this reduces to $\max_l\left\{\abs{\z_l^\top\x+b_l}\right\}$.  The differences between this and the SimNet generalization of maxout are: (i) the $L_p$ unit generalizes maximum of \emph{absolute} values (rather than the values themselves), and (ii) the $L_p$ unit tries to create a maxout in a single operation whereas the SimNet creates $h(\x)$ over a succession of two operators -- similarity followed by MEX.

Next, consider the case of fixed $\xi>0$ and unweighted ($\uu_l=\1$) linear similarity ($\uu_l^\top\phi(\x,\z_l)=\x^\top\z_l$) or unweighted $l_p$ similarity ($\uu^\top\phi(\x,\z)=-\norm{\x-\z}_p^p$) with fixed $0<p\leq2$.  We will show below that in this case, the output $h(\x)$ is the result of a non-linear monotone activation function applied to the inner-product between a mapping of the input $\x$ and a vector $\w$ in some high-dimensional feature space $\R^F$.  More formally, we will show that $h(\x)=\sigma(\inprod{\w}{\psi_\phi(\x)})$, where the mapping $\psi_\phi:\R^d\to\R^F$ depends on the choice of similarity mapping $\phi$, $\sigma$ is a non-linear monotone activation function, and $\w=\sum_{l=1}^n\alpha_l\psi_\phi(\z_l)$ for some $\alpha_1,...,\alpha_n\in\R$.  We thus conclude that the output unit is a ``neuron'' in the classical sense, but in a high-dimensional feature space.  To prove this assertion, we notice that $h(\x)$ can be expressed as follows:
\bea
h(\x)
&=&MEX_\xi\left\{\uu_l^\top\phi(\x,\z_l)+b_l\right\}_{l=1,...,n}\nonumber \\
&=&\frac{1}{\xi}\ln\left(\frac{1}{n}\sum_{l=1}^n\alpha_l\cdot\exp\left\{\xi\sum_{i=1}^d\phi(\x,\z_l)_i\right\}\right) \nonumber\\
&=&\sigma\left(\sum_{l=1}^n\alpha_{l}\cdot K_\phi (\x,\z_l)\right) \label{eqn:hx_2} 
\eea
where $\alpha_l:=e^{\xi\cdot b_l}$ and $\sigma$ is a non-linear monotone activation function given by $\sigma(t)=\frac{1}{\xi}\ln\left(\frac{t}{n}\right)$.  We use the notation $K_\phi(\x,\z):=\exp\left\{\xi\sum_{i=1}^d\phi(\x,\z)_i\right\}$ to indicate that, under the similarity mappings considered, the function is a kernel on $\R^d$.  In particular, for the linear and $l_p$ similarities we have:
\beas
K_{lin}(\x,\z)  &=& \exp\left\{\xi\cdot\x^\top\z \right\}  \\
K_{l_p}(\x,\z) &=& \exp\left\{-\xi\norm{\x-\z_l}_p^p\right\}
\eeas
As shown in~\cite{scholkopf2002learning}, $K_{lin}$ and $K_{l_p}$ are kernels on $\R^d$ (note that for $p>2$, the expression above for $K_{l_p}$ is not a kernel).  We refer to them as the ``Exponential'' kernel and the ``Generalized Gaussian'' kernel\footnote{When $p=2$, this reduces to the well-known Gaussian (radial basis function) kernel.  When $p=1$, it reduces to the Laplacian kernel.} respectively.  Since $\sigma$ is monotonically increasing, $h(\x)$ realizes a 2-class ``reduced'' kernel-SVM decision rule, with $\z_l$ being the (reduced) support-vectors\footnote{We use the term ``reduced'' to refer to the case where the number of support-vectors is predetermined and they are not constrained to lie in the training set.  This setting was studied in~\cite{wu2006direct} in the context of binary (2-class) classification.  The extension to multiclass~(\cite{crammer2002algorithmic}) is straightforward.} and $\alpha_l\geq0$ being the coefficients associated with the support-vectors.  Let $\psi_\phi:\R^d\to\R^F$ be a feature mapping corresponding to $K_\phi$, i.e. $K_\phi(\x,\z)=\inprod{\psi_\phi(\x)}{\psi_\phi(\z)}$.  Eqn.~\ref{eqn:hx_2} can now be expressed as $h(\x)=\sigma(\inprod{\w}{\psi_\phi(\x)})$, where $\w:=\sum_{l=1}^n \alpha_l\psi_\phi(\z_l)$.  This shows that the output unit $h(\x)$ is a ``neuron is feature space'', as stated above.

In the case of weighted ($\uu_l$ are learned) $l_p$-similarity, the hypothesis space realized by the output unit $h(\x)$ is no longer representable by a kernel-SVM.  Moreover, the view of $h(\x)$ as a neuron in feature space with learned vector $\w$ no longer applies.  This is stated formally below (proof in app.~\ref{app:proof_l_p_sim_wgt_no_K}):
\begin{theorem}
\label{thm:l_p_sim_wgt_no_K}
For any dimension $d\in\N$, and constants $c>0$ and $p>0$, there are no mappings $Z:\R^d\to\R^d$ and $U:\R^d\to\R_+^d$ and a kernel $K:(\R^d\times\R_+^d)\times(\R^d\times\R_+^d)\to\R^d\times\R_+^d$, such that for all $\z,\x\in\R^d$ and $\uu\in\R_+^d$:
\be
K\left([Z(\x),U(\x)],[\z,\uu]\right)=\exp\left\{-c\sum_{i=1}^d u_i\abs{x_i-z_i}^p\right\} \label{eqn:l_p_sim_wgt_K_req}
\ee
\end{theorem}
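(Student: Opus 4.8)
The plan is to argue by contradiction. Suppose mappings $Z,U$ and a kernel $K$ as in the statement exist. Since $K$ is a symmetric positive semidefinite kernel it admits a feature map $\Psi$ into a Hilbert space $\HH$ with $K(P,Q)=\inprod{\Psi(P)}{\Psi(Q)}$. Writing $a(\x):=[Z(\x),U(\x)]$ and $\Phi(\x):=\Psi(a(\x))$, I would first exploit the fact that each $a(\x)$ is itself a legitimate \emph{second} argument of $K$ (a template $[\z,\uu]$ with $\z=Z(\x),\uu=U(\x)$). Substituting such a template into the defining relation~\eqref{eqn:l_p_sim_wgt_K_req} yields the pulled-back kernel
\[
G(\x,\x'):=K(a(\x),a(\x'))=\exp\Big\{-c\sum_{i=1}^d U(\x')_i\,\abs{x_i-Z(\x')_i}^p\Big\},
\]
which, being a restriction of $K$ along $a$, must be a symmetric positive semidefinite kernel on $\R^d$. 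Symmetry alone forces the exponent $D(\x,\x'):=\sum_i U(\x')_i\abs{x_i-Z(\x')_i}^p$ to be symmetric in $\x\leftrightarrow\x'$; combined with positive semidefiniteness of $G$ (already visible in $2\times2$ minors, which kill the spurious ``reflected'' solutions such as $Z(\x)=\gamma-\x$ with constant $U$), I expect this to pin the admissible pairs $(Z,U)$ down to the identity-type form $D(\x,\x')=\sum_i w_i\abs{x_i-x'_i}^p$ with an input-\emph{independent} weight $w\in\R_+^d$, i.e. $G$ is forced to be a Generalized Gaussian kernel of a single fixed bandwidth.

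The second, essential step brings in templates $[\z,\uu]$ whose weight $\uu$ does \emph{not} equal $U(\x')$ for any $\x'$, so that $[\z,\uu]$ lies outside the image of $a$ and the $a$-$a$ block no longer determines $K$. For such a fixed template, the map $\x\mapsto K(a(\x),[\z,\uu])=\inprod{\Phi(\x)}{\Psi([\z,\uu])}$ is a bounded linear functional of the anchor features, so its ``slice''
\[
f_{\z,\uu}(\x):=\exp\Big\{-c\sum_{i=1}^d u_i\,\abs{x_i-z_i}^p\Big\}
\]
is the $\Phi$-representation of the projection of $\Psi([\z,\uu])$ onto $\overline{\mathrm{span}}\{\Phi(\x)\}$, and therefore must belong to the reproducing kernel Hilbert space $\HH_G$ of $G$, with finite norm $\norm{f_{\z,\uu}}_{\HH_G}=\norm{P\Psi([\z,\uu])}\le\sqrt{K([\z,\uu],[\z,\uu])}<\infty$. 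Equivalently, the supremum over finite anchor sets $\{\x^{(k)}\}$ of $b^\top\mathbf G^{-1}b$, where $b_k=f_{\z,\uu}(\x^{(k)})$ and $\mathbf G=[G(\x^{(j)},\x^{(k)})]$, must be finite; if it diverges, no finite self-inner-product $K([\z,\uu],[\z,\uu])$ can exist and we reach a contradiction.

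The crux, and the step I expect to be hardest, is to show that this membership fails. The mechanism is a scale mismatch: after the reduction $G$ is a Generalized Gaussian of one fixed width, whereas choosing $\uu$ with an arbitrarily large coordinate makes the slice $f_{\z,\uu}$ an arbitrarily sharp Generalized Gaussian concentrated on $\{x_i=z_i\}$. I would formalize ``too sharp to lie in $\HH_G$'' either (i) abstractly, invoking the known fact that the RKHS of a Gaussian/Generalized-Gaussian kernel contains no Generalized Gaussian of strictly smaller width, or (ii) constructively, by exhibiting finitely supported ``spike'' functionals $\sum_k a_k\delta_{\x^{(k)}}$ for which $\sum_{j,k}a_ja_k\,G(\x^{(j)},\x^{(k)})$ stays bounded while $\big(\sum_k a_k f_{\z,\uu}(\x^{(k)})\big)^2$ does not, forcing $\norm{f_{\z,\uu}}_{\HH_G}=\infty$. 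The main obstacle is carrying this out \emph{uniformly over the unknown} $Z,U$: I must either first secure the reduction of $G$ to a known single-width form, or replace it by a direct estimate that uses only $G\le 1$ and the fixed exponents $p,c$ to cap the sharpness of every $\HH_G$ function, and then contrast that cap with the unbounded sharpening freely available on the template side. Establishing this non-membership contradicts the forced finiteness of $\norm{f_{\z,\uu}}_{\HH_G}$ above, proving the theorem.
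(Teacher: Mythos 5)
Your overall skeleton is a legitimate RKHS reformulation of the right contradiction strategy: pulling $K$ back along the anchor map $a(\x)=[Z(\x),U(\x)]$ to get $G$, and observing that every slice $f_{\z,\uu}(\x)=K(a(\x),[\z,\uu])$ must lie in the RKHS $\HH_G$ with norm bounded by $\sqrt{K([\z,\uu],[\z,\uu])}<\infty$ is correct. But the crux step fails for most of the range of $p$ the theorem covers. The ``known fact'' you invoke --- that the RKHS of a Generalized Gaussian kernel contains no strictly sharper Generalized Gaussian --- is true for $p=2$ but false for $p<2$. For $p<2$ the spectral density of $t\mapsto e^{-cw\abs{t}^p}$ decays only polynomially (like $\abs{\omega}^{-(1+p)}$ in one dimension), so $\HH_G$ is a fractional Sobolev-type space that contains \emph{every} slice $e^{-cu\abs{x-z}^p}$, no matter how large $u$ is. Concretely, for $p=1$ the RKHS of the Laplacian kernel $e^{-\gamma\abs{x-x'}}$ on $\R$ is $H^1(\R)$ with norm proportional to $\int\bigl(\gamma^2 f^2+(f')^2\bigr)$, and $e^{-\beta\abs{x}}$ has finite $H^1$ norm for \emph{every} $\beta>0$; your constructive variant (ii) would likewise yield bounded ratios there. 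So no contradiction can be extracted from sharp templates when $p<2$, and the theorem claims nonexistence for all $p>0$. A second, independent gap is your opening reduction: that symmetry plus positive semidefiniteness of $G$ ``pins down'' $Z$ to identity-type and $U$ to a constant is asserted, not proven --- killing one reflected solution via a $2\times2$ minor does not classify all pairs $(Z,U)$ (for instance $U\equiv\0$ satisfies both constraints and must be excluded separately).

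The repair, which is essentially the paper's proof, is to use the \emph{degenerate} template instead of a sharp one: plugging $[\z,\uu]=[\0,\0]$ into eqn.~\ref{eqn:l_p_sim_wgt_K_req} makes the slice identically $1$, i.e.\ ``too flat'' rather than ``too sharp'' to be compatible with $G$. The paper first derives $Z=\mathrm{id}$ rigorously from the equality case of Cauchy--Schwarz applied to the pair $\psi(a(\x)),\psi(a(Z(\x)))$: their inner product is $1$ while both norms are at most $1$, forcing equality of the feature vectors and unit norms, after which isolating coordinates via $\uu=\mathbf{e}_{i_0}$ gives $Z(\x)=\x$. It then shows $U(\x)\neq\0$ for each $\x$ (otherwise all anchor features coincide, collapsing all inputs), so that $\inprod{\psi(a(\x))}{\psi(a(\x'))}\to0$ as $\x'\to\infty$. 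This produces arbitrarily large $\epsilon$-orthonormal families (def.~\ref{def:eps_ortho_set}) of unit anchors, each of which has inner product exactly $1$ with the single fixed vector $\psi([\0,\0])$ --- impossible by the Gram-matrix/Gershgorin estimate of lemma~\ref{lemma:inf_ortho_set_prod_0}. Note that this route never needs $U$ to be constant nor $G$ to take any closed form, and it handles all $p>0$ uniformly. If you wish to stay in your RKHS language, the salvageable statement is: the constant function $1$ cannot lie in $\HH_G$ when the normalized sections of $G$ become asymptotically orthogonal along a sequence of anchors --- which is exactly the paper's mechanism.
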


We now turn to consider a straightforward extension to the setup above, which includes $k$ output units.  The MLP will now consist of an input signal $\x\in\R^d$, a set of $n$ hidden units defined by similarity functions over $\x$ (all based on the same similarity mapping $\phi$), and a set of $k$ output units defined by MEX operators (all having the same parameter $\xi$) with offsets $b_{rl}$ where $l\in\{1,...,n\}$ runs over the hidden units and $r\in\{1,...,k\}$ is the index of the output unit.  Fig.~\ref{fig:layers_nets}(e) illustrates this basic operation.  If we consider the output nodes as predicting a label associated with the input $\x$, the chosen label being the index of the node with maximal activation, then running the two operators, similarity and MEX, one following the other, produces the classification rule below:
\be
\hat{y}(\x)=\argmax_{r=1,...,k}MEX_{\xi}\{\uu_l^\top\phi(\x,\z_l)+b_{rl}\}_{l=1}^n 
\label{eqn:mlp_multiclass}
\ee
This classification measures weighted similarities to $n$ templates, with class-dependent offsets.  The role of the MEX operators is to combine the weighted similarities (with offsets) of the input $\x$ to the templates.  For example, when $\xi\to+\infty$, the classification rule is attracted to the most similar template where offsets assign relevancy of templates to classes.  Let $h_r(\x)$, $r=1,...,k$, be the value of output unit $r$ when the MLP is fed with input $\x$.  Following the lines of the derivation carried out for the single-output MLP, when working with unweighted linear similarity or unweighted $l_p$ similarity with $0<p\leq2$, it holds that:
$$ h_r(\x)=\sigma\left(\inprod{\w_r}{\psi_\phi(\x)}\right)$$
where $\w_r:=\sum_{l=1}^n\alpha_{rl}\psi_\phi(\z_l)$ and $\alpha_{rl}:=e^{\xi\cdot b_{rl}}$.  Moreover, the decision rule in eqn.~\ref{eqn:mlp_multiclass} can be expressed as:
\beas
\hat{y}(\x)
&=& \argmax_{r\in\{1,...,k\}}h_r(\x) \\
&=& \argmax_{r\in\{1,...,k\}}\inprod{\w_r}{\psi_\phi(\x)} \\
&=& \argmax_{r\in\{1,...,k\}}\sum_{l=1}^n\alpha_{rl} K_\phi(\x,\z_l)
\eeas
where $K_\phi$ is a kernel (Exponential or Generalized Gaussian) on $\R^d$.  This classification realizes the hypothesis space of a reduced multiclass kernel-SVM\footnote{Note that the coefficients $\alpha_{rl}$ are positive, whereas in classical multiclass SVM they may be any real numbers.  This however does not limit generality, as we can always add a common offset to all coefficients after SVM training is complete.}.

To summarize so far, we have shown that with linear similarity, the ``MLP'' construction consisting of input $\to$ hidden layer $\to$ output, gives rise to the hypothesis space of a (reduced) SVM with the Exponential kernel.  Replacing the linear similarity with unweighted $l_p$ similarity having fixed order $p$, gives rise to a kernel-SVM if and only if $p\leq2$, in which case the underlying kernel is the Generalized Gaussian kernel (the special cases of Gaussian and Laplacian kernels are obtained for $p=2$ and $p=1$ respectively).  With these similarities that give rise to kernel machines, a unit generated by similarity operators followed by MEX with offsets is a ``neuron in feature space''.  Finally, with weighted ($\uu_l$ are learned) $l_p$ similarity, the framework is no longer representable by a kernel-SVM.

To obtain a sense of the network's abstraction level, i.e. its ability to capture concept (category) distributions in input space, consider the classification rule in eqn.~\ref{eqn:mlp_multiclass} in the case $\xi\to+\infty$:
$$\hat{y}(x)=\argmax_{r=1,...,k}\max_{l=1,...,n}\{\uu_l^\top\phi(\x,\z_l)+b_{rl}\}$$
For any $r\in\{1,...,k\}$, denote by $A_r$ the decision region in input space that corresponds to class $r$, i.e. $A_r:=\{\x\in\R^d:\hat{y}(\x)=r\}$.  To understand the shape of $A_r$, we make the following definitions:
\bea
&A_{r,l}^{r',l'}:=& \label{eqn:decision_region_basic_shape} \\
&\{\x\in\R^d:\uu_l^\top\phi(\x,\z_l)+b_{rl}\geq\uu_{l'}^\top\phi(\x,\z_{l'})+b_{r'l'}\}& \nonumber
\eea
$$ A_{r,l}:=\bigcap_{(r',l')\neq(r,l)} A_{r,l}^{r',l'} $$
where the class index $r'$ ranges over $\{1,...,k\}$, and the template indexes $l,l'$ range over $\{1,...,n\}$.  One can readily see that up to boundary conditions:
$$ A_{r} = \bigcup_{l\in\{1,...,n\}}A_{r,l} $$

Consider first the setting of linear similarity ($\phi(\x,\z)_i=x_i z_i$).  In this case $A_{r,l}^{r',l'}$ are half-spaces and $A_{r,l}$ are intersections of half-spaces (polytopes).  The decision region $A_r$ is thus a union of $n$ polytopes.  As we now show, this is the same type of decision regions as obtained with unweighted $l_2$ similarity ($\uu_l=\1$, $\phi(\x,\z)_i=-\abs{x_i-z_i}^2$).  Indeed, in this case the term $\norm{\x}_2^2$ in both sides of the inequality defining $A_{r,l}^{r',l'}$ cancels-out, and we obtain again a half-space.  This in turn implies that as before, $A_{r,l}$ are polytopes and $A_r$ is a union of polytopes.  We conclude that with the MLP structure of: input $\to$ hidden layer $\to$ output units, the setting that realizes a Gaussian kernel machine (unweighted $l_2$ similarity), is qualitatively equivalent to the ``ConvNet'' (linear similarity) setting that realizes an Exponential kernel machine.  The difference in kernels does not account for any material difference in the network's hypothesis space, i.e. its abstraction level.

Remaining with $l_2$ similarity, we now consider the weighted setting, i.e. the setting in which $\uu_l$ are not fixed.  Thm.~\ref{thm:l_p_sim_wgt_no_K} tells us that in this case the hypothesis space is no longer governed by kernel-SVM.  From the decision region point-of-view, it is not difficult to see that in this case $A_{r,l}^{r',l'}$ is no longer a half-space, but a region defined by a second-order hyper-surface.  This implies that the set $A_{r,l}$ is no longer a polytope, and in particular is not necessarily convex.  The possible shapes that the decision region $A_r$ can take are thus enriched.  We conclude that unlike in the case of unweighted $l_2$ similarity, the setting of weighted $l_2$ similarity is characterized by an abstraction level higher than that induced by linear similarity (convolutional operator).

In the general setting of $l_p$ similarity, the sets $A_{r,l}^{r',l'}$ are more complex, and may be governed by non-convex non-smooth separating hyper-surfaces.  The full analysis is outside the scope of this paper, but an informal illustration of how the space is divided for $p=1$ and $d=2$ ($\R^d$ is the 2D plane) is given in fig.~\ref{fig:l1_decision_regions}.  Under this specific setup, the 2D plane is divided into two by a piece-wise linear separating boundary.  The unweighted case (uniform weights) is shown in fig.~\ref{fig:l1_decision_regions}(a).  In this case the space is divided equally (up to a shift caused by the offsets $b_{rl},b_{r'l'}$) based on the $l_1$ (Manhattan) distance metric.  Adding weights deforms the boundary line, where the higher the weights associated with a template ($\z_l$ or $\z_{l'}$) are, the less space is allocated to that template.  For example, in fig.~\ref{fig:l1_decision_regions}(d) the weights associated with the template $z_{l'}$ are uniformly high, thereby creating a small aperture in the 2D plane around that template.  Given that $A_{r,l}^{r',l'}$ is highly non-convex in the weighted setting, we expect weighted $l_1$ similarity to provide a higher abstraction level than that of linear similarity (convolutional operator).

\begin{figure*}
\includegraphics[width=\textwidth,height=\textheight,keepaspectratio]{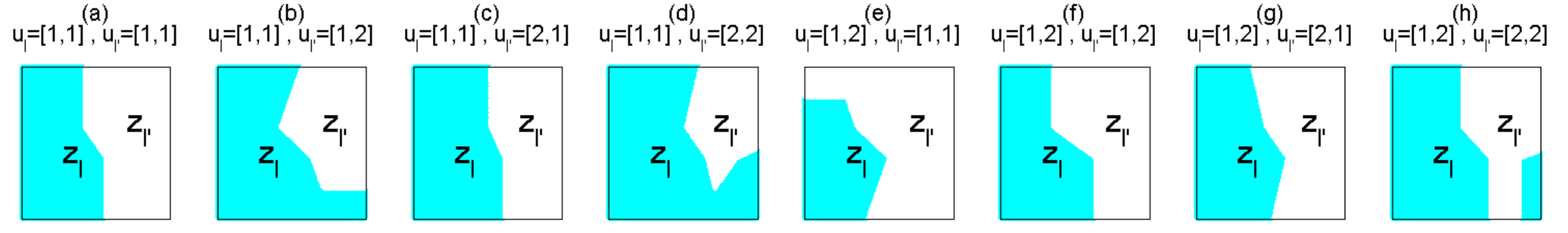}
\vspace{-3mm}
\caption{\footnotesize Illustration of $A_{r,l}^{r',l'}$ (defined in eqn.~\ref{eqn:decision_region_basic_shape}) in the setting of $l_1$ similarity and $d=2$ ($\R^d$ -- the 2D plane).  Each panel shows the location of the templates $\z_l$ and $\z_{l'}$, and on the top the values of the corresponding similarity weights.~~(a):~Unweighted setting (uniform weights).  The 2D plane is divided equally between the two templates (up to a shift resulting from the offsets $b_{r,l},b_{r',l'}$.~~(d):~Here $\z_{l'}$ is associated with high weights, thus the portion of the plane allocated to this template is ``shrinked''.}
\label{fig:l1_decision_regions}
\end{figure*}
\subsection{A basic 3-layer SimNet with locality, sharing and pooling} \label{subsec:local_share_pool}
Next, we analyze a 3-layer SimNet with locality, sharing and pooling.  The network's input is processed by patches (locality), with the same templates and weights applied to all patches (sharing), thereby creating a stack of feature maps (channels) -- one for each template.  Spatial regions of each feature map are then pooled together to reduce dimensionality, and finally, a classification output layer predicts the label of the input.  We will show that such a network, consisting of input $\to$ feature maps $\to$ pooling $\to$ output, also corresponds to a kernel-SVM, with the kernels designed for a ``patch-based'' representation of the input signal. 

Locality, sharing and pooling are realized in the conventional manner.  Namely, the input is divided into (possibly overlapping) patches $\x_{ij}\in\R^d$ where $d=h\cdot w\cdot D$, with $h,w$ being the height and width of the patches.  A similarity layer as illustrated in fig.~\ref{fig:layers_nets}(a), but with the same similarity mapping $\phi$ for all channels, matches the patch $\x_{ij}$ with the template $\z_l\in\R^d$ (which is now a template representing a local patch in the layer's input) using the weights $\uu_l\in\R_+^d$, and the resulting value $\uu_l^\top\phi(\x_{ij},\z_l)$ is stored in coordinates $(i,j,l)$ of the layer's output.

The mapping from the similarity layer to the $k$-node classification output is realized through two MEX layers.  The first MEX layer implements a pooling layer as follows.  Let $q(i,j)=(q_h(i),q_w(j))$ be a (contraction) mapping of the 2D coordinate system in the similarity layer to the 2D coordinate system in the pooling layer.  Normally, a 2D coordinate in the pooling layer corresponds to a 2D window in the similarity layer.  The value assigned to an element in the pooling layer is simply a MEX operation taken over the corresponding 2D window (in the respective channel $l\in\{1,...,n\}$) in the similarity layer:
$$pool(p_h,p_w, l)=MEX_{\xi_1}\{sim(i,j,l)\}_{i,j: q(i,j)=(p_h,p_w)}$$
All MEX operators in the layer have the same parameter -- $\xi_1$.  When $\xi_1\rightarrow+\infty$ for instance, we obtain max-pooling as implemented in conventional ConvNets. 

The second MEX layer implements a dense mapping from the pooling layer to the $k$ output nodes, which includes offsets.  The value of the $r$'th output node is given by:
$$out ( r )= MEX_{\xi_2}\{pool(p_h,p_w,l)+b_{r l p_h p_w}\}_{p_h,p_w,l}$$
where $(p_h,p_w)$ runs over the 2D coordinates of the pooling layer and $l$ runs over the pooling layer's channels (which correspond to templates).  Note that here too all MEX operators have the same parameter -- $\xi_2$.  The offsets $b_{r l p_h p_w}$ depend on the output node $r$ and on the 3D coordinates of the pooling layer $(p_h,p_w,l)$, i.e. for every output node there is an offset for each coordinate in the pooling layer.

The SimNet we obtain is illustrated in fig.~\ref{fig:layers_nets}(f).  It is a basic similarity-pooling-output network that employs locality and sharing, as conventional ConvNets do.  The point to be made next, is that in the special case where $\uu_l=\1$, $\xi_1,\xi_2$ are fixed to a constant $\xi>0$, and $\phi$ is set to linear form or $l_p$ form with fixed $p\leq2$, the classification resulting from this network is a kernel-SVM, where the kernel is designed for a ``patch-based'' representation of the input signal.  First, by concatenating the three steps -- similarity, pooling and output, and assuming that $\xi_1=\xi_2=\xi>0$, the decision rule associated with the network becomes:
\be
\hat{y}(inp)=\argmax_{r=1,...,k}\mexu{\xi}{i,j,l}\left\{\uu_l^\top\phi(\x_{ij},\z_l)+b_{r,l,q(i,j)}\right\} 
\label{eqn:simnet_patch_class}
\ee
This follows from the identities below:
\beas
&\mexu{\xi}{p_h,p_w,l}\left\{\mexu{\xi}{i,j: q(i,j)=(p_h,p_w)}\left\{\uu_l^\top\phi(\x_{ij},\z_l)\right\}+b_{r l p_h p_w}\right\}& \\
&=\mexu{\xi}{p_h,~p_w,~l,~i,j:q(i,j)=(p_h,p_w)}\left\{\uu_l^\top\phi(\x_{ij},\z_l)+b_{r l p_h p_w}\right\}& \\
&=\mexu{\xi}{i,j,l}\left\{\uu_l^\top\phi(\x_{ij},\z_l)+b_{r,l,q(i,j)}\right\}&
\eeas
where for the first equality, we used the collapsing property of the MEX operator described in eqn.~\ref{eqn:mex_collapse}.  The classification described in eqn.~\ref{eqn:simnet_patch_class} is similar to that described in eqn.~\ref{eqn:mlp_multiclass}, but has two important distinctions: \emph{(i)} the templates $\z_l$ are local and similarity is applied locally (hence the ``locality'' and ``sharing''), and \emph{(ii)} the offsets are region-based (hence the ``pooling''), i.e. each collection of input patches $\x_{ij}$ ascribed to the same pool is associated with a single set of offsets (per-class and per-template).  To see the kernel structure associated with this classification, we perform the following manipulations to the rule given in eqn.~\ref{eqn:simnet_patch_class}:
\bea
&\hat{y}(inp)=& \nonumber \\
&\argmax\limits_{r=1,...,k}\mexu{\xi}{i,j,l}\left\{\uu_l^\top\phi(\x_{ij},\z_l)+b_{r,l,q(i,j)}\right\}=& \nonumber \\
&\argmax\limits_{r=1,...,k}\sum\limits_{i,j,l}e^{\xi\cdot(\uu_l^\top\phi(\x_{ij},\z_l)+b_{r,l,q(i,j)})}=& \nonumber \\
&\argmax\limits_{r=1,...,k}\sum\limits_{i,j,l}\underbrace{e^{\xi\cdot b_{r,l,q(i,j)}}}_{:=\alpha_{r,l,q(i,j)}}\cdot e^{\xi\cdot\uu_l^\top\phi(\x_{ij},\z_l)}=& \nonumber \\
&\argmax\limits_{r=1,...,k}\sum\limits_{p_h,p_w,l}\alpha_{r l p_h p_w}\sum\limits_{i,j:q(i,j)=(p_h,p_w)} e^{\xi\cdot\uu_l^\top\phi(\x_{ij},\z_l)}& \label{eqn:simnet_patch_class_2}
\eea
Setting $\uu_l=\1$, and referring to subsec.~\ref{subsec:mlp}, we denote $K_\phi(\x_{ij},\z_l):=\exp\{\xi\cdot\1^\top\phi(\x_{ij},\z_l)\}$, emphasizing that this function is a kernel for the similarity mappings we consider (Exponential kernel for linear similarity, Generalized Gaussian kernel for $l_p$ similarity with fixed $p\leq2$).  Eqn.~\ref{eqn:simnet_patch_class_2} then becomes:
\bea 
&\hat{y}(inp)=& \label{eqn:simnet_patch_class_3} \\
&\argmax\limits_{r=1,...,k}\sum\limits_{p_h,p_w,l}\alpha_{r l p_h p_w}\sum\limits_{i,j:q(i,j)=(p_h,p_w)}K_\phi(\x_{ij},\z_l)& \nonumber \\
&=\argmax\limits_{r=1,...,k}\sum_{p_h,p_w,l}\alpha_{r l p_h p_w}\K_\phi(X,Z_{l p_h p_w})& \nonumber
\eea
where $X$ contains the concatenation of all the input patches $\x_{ij}$, and $\Z_{l p_h p_w}$ is a structure containing copies of $\z_l$ in locations corresponding to the pool index $(p_h,p_w)$ -- the details, including definition of $\K_\phi$ and proof that it is indeed a kernel, are given in app.~\ref{app:patch_ksvm}.

\section{Other SimNet settings -- global average pooling } \label{sec:other_simnets}
In subsec.~\ref{subsec:local_share_pool} we introduced the SimNet basic building chain of the form: input $\to$ similarity $\to$ pooling $\to$ output, whose structure follows the line of classical ConvNets.  We noted that the basic building chain realizes a kernel-SVM hypothesis space, where the templates in the similarity layer correspond to the (reduced) support-vectors, and the offsets in the last MEX layer (from pooling to output) are related to the SVM coefficients.  The SVM hypothesis space is realized when the similarity operator is set to linear form or $l_p$ form with fixed $p\leq2$, and is unweighted ($\uu_l=\1$).  Using weighted $l_p$ similarity (weights are not applicable to linear similarity) has the potential of providing a richer hypothesis space than kernel-SVM (at the expense of doubling the number of parameters in the similarity layer).  Indeed, experiments we conducted (reported in sec.~\ref{sec:exp}) validate the power of $l_p$ similarity weighting, showing that  it matters more than merely the added number of parameters to the model.

In this section, we introduce another SimNet building chain with two MEX layers, designed in such a way that when the MEX parameters are equal, the chain collapses into the one presented above (decision rule in eqn.~\ref{eqn:simnet_patch_class}), but when the MEX parameters are determined separately -- either learned using training data or set manually, the SimNet chain allows for new possibilities (without additional parameters).  For example, setting the MEX parameter of the first layer to $1$ and that of the second layer to $0$ gave rise to the best experimental performance we encountered.

The idea is to switch the roles of the two MEX layers -- rather than having the first play the role of pooling and the second the role of classification (using the offsets $b_{rl}$), we start with a MEX layer with offsets and finish with a MEX for pooling.  The interpretation of such a structure is that each input patch $\x_{ij}$ undergoes classification in the first MEX layer.  The second MEX layer performs a majority voting over all the patch-based classification results to form a final classification decision.  This approach follows the line of the ``global average pooling'' structure recently suggested in the context of ConvNets, which has been shown to outperform the traditional ``dense classification'' paradigm~(\cite{lin2013network,szegedy2014going}).  To enforce spatial consistency in the labeling characteristics of patches, we constrain the first MEX layer's offsets to be uniform inside predetermined spatial regions.  The resulting SimNet, which we refer to as a ``patch labeling'' network, is illustrated in fig.~\ref{fig:layers_nets}(g) (note that all channels in the similarity layer share the same similarity mapping, and that both MEX layers have global parameters $\xi_1,\xi_2$).  Its classification rule takes the following form:
$$ \hat{y}(inp)=\argmax_{r=1,...,k}out(r) $$
with:
\beas
&out(r)=& \\
&\mexu{\xi_2}{i,j}\{\mexu{\xi_1}{l}\{\uu_l^\top\phi(\x_{ij},\z_l)+b_{r,l,q(i,j)}\}\}& 
\eeas
The variables which can be learned here are the offsets $b_{r l p_h p_w}\in\R$ (with $r$ ranging over the classes, $l$ over the templates and ($p_h,p_w)$ over the regions in which offsets are shared), the templates $\z_l\in\R^{hwD}$, the similarity weights $\uu_l\in\R_+^{hwD}$, the order $p>0$ in case $l_p$ similarity is chosen, and the MEX parameters $\xi_1,\xi_2\in\R$.  Assume we constrain the MEX parameters to be equal: $\xi_1=\xi_2=\xi$.  The MEX collapsing property (eqn.~\ref{eqn:mex_collapse}) then applies, and the classification rule becomes:
$$ \hat{y}(inp)=\argmax_{r=1,...,k}MEX_{\xi}\{\uu_l^\top\phi(\x_{ij},\z_l)+b_{r,l,q(i,j)}\}_{i,j,l} $$
which is identical to the decision rule in eqn.~\ref{eqn:simnet_patch_class}.  However, there is no reason to have the MEX parameters equal to each other.  We can estimate their value during training, or set them manually.  For example, during our experimentation we found that the case of equal MEX parameters -- $\xi_1=\xi_2=\xi$, is significantly outperformed by the setting $\xi_2\to0$, which corresponds to the following classification rule:
\beas
&\hat{y}(inp)=& \\
&\argmax\limits_{r=1,...,k}\sum_{i,j}\mexu{\xi_1}{l\in\{1,...,n\}}\left\{\uu_l^\top\phi(\x_{ij},\z_l)+b_{r,l,q(i,j)}\right\}& 
\eeas

\section{Initializing parameters using unsupervised learning} \label{sec:init}
For classical ConvNets, various schemes of initializing a network based on unlabeled data (unsupervised initialization) have been proposed (c.f.~\cite{hinton2006fast,bengio2007greedy,vincent2008extracting}).  Over time, however, these were taken over by carefully selected random initializations that do not use data at all (see for example~\cite{krizhevsky2012imagenet,zeiler2014visualizing,sermanet2013overfeat}).  No initialization scheme to-date is sufficient on its own for overcoming the hardness of training.  Indeed, successful training of ConvNets typically requires designing an over-specified network (i.e. a network that is much larger than necessary in order to represent the true hypothesis space).  While the latter has been shown to produce good training results, it bares a computational price, and also aggravates the problem of overfitting.  The enhanced susceptibility to overfitting has led to various regularization techniques and heuristics (Dropout~(\cite{hinton2012improving}) being the most prominent), which nowadays form an art that one must master in order to properly train ConvNets.  In this section, we discuss a natural unsupervised initialization scheme for SimNets, which is based on statistical estimation.  Such a scheme may provide a more effective local minima in the process of training a SimNet, thereby reducing the need for over-specification, supporting smaller networks that are more efficient computationally, and less prone to overfit.  Experiments we conducted (reported in sec.~\ref{sec:exp}) validate this conjecture, showing that the SimNet unsupervised initialization scheme indeed improves performance over random initializations, especially in the case of small networks.

Recall from sec.~\ref{sec:simnet_arch} that measuring weighted similarities to templates forms the similarity layer -- a basic building block of the SimNet architecture (see fig.~\ref{fig:layers_nets}(a)).  Focusing on the case of $l_p$ similarity mappings ($\phi(\x,\z)_i = -\abs{x_i-z_i}^p$), we show how the application of statistical estimation methods to unlabeled training data can produce initialization values for the layer's templates $\z_1,...,\z_n$,  weights $\uu_1,...,\uu_n$ and orders $p_1,...,p_n$.  Consider a probability distribution over $\R^d$ defined by a mixture of $n$ Generalized Gaussian distributions, each having independent coordinates with a shared shape parameter and separate scales and means:
$$P(\x)=\sum_{l=1}^n\lambda_l\prod_{i=1}^d\frac{\beta_l}{2\alpha_{l,i}\Gamma(1/\beta_l)}
\exp\left\{-\left(\frac{\abs{x_i-\mu_{l,i}}}{\alpha_{l,i}}\right)^{\beta_l}\right\}$$
In the above, $\lambda_l$ stands for the prior probability of component $l$ ($\lambda_l\geq0,\sum_l\lambda_l=1$), $\beta_l>0$ stands for the shape parameter of all coordinates in component $l$, $\alpha_{l,i}>0$ stands for the scale of coordinate $i$ in component $l$, $\mu_{l,i}\in\R$ stands for the mean of coordinate $i$ in component $l$, and $\Gamma$ is the Gamma function, defined by $\Gamma(s)=\int_0^{\infty}e^{-t}t^{s-1}dt$.  The log-probability that a vector drawn from this distribution is equal to $\x$ and originated from component $l$ is: 
$$\log P(\x\land \text{component}~l)=-\sum_{i=1}^d\alpha_{l,i}^{-\beta_l}\abs{x_i-\mu_{l,i}}^{\beta_l}+c_l$$
where $c_l:=\log\left\{\lambda_l\prod_{i=1}^d\frac{\beta_l}{2\alpha_{l,i}\Gamma(1/\beta_l)}\right\}$ is a constant that depends on the component only (not on $\x$).  Setting the layer's templates by $z_{l.i}=\mu_{l,i}$, its weights by $u_{l,i}=\alpha_{l,i}^{-\beta_l}$ and its $l_p$ orders by $p_l=\beta_l$, would give:
$$ \uu_l^\top\phi_l(\x_{ij},\z_l)=\log P(\x\land \text{component}~l)-c_l $$
This implies that if we assume input patches follow a Generalized Gaussian mixture as described, initializing the similarity layer's templates, weights and orders as above would result in channel $l$ of the layer's output holding, up to a constant, the probabilistic heat map of component $l$ and the patches.  This observation suggests estimating the parameters (shapes $\beta_l$, scales $\alpha_{l,i}$ and means $\mu_{l,i}$) of the Generalized Gaussian mixture using unlabeled input patches (via standard statistical estimation methods, such as that presented in~\cite{bazi2007image}), and initializing the similarity layer accordingly.
  
Consider now the case where the initialized $l_p$ similarity layer is followed by a MEX layer with learned offsets (see fig.~\ref{fig:layers_nets}(h), where for convenience, the linear index $t$ is used to refer to elements of the MEX layer's 3D output array).  We now assume that not only do input patches come from a mixture of Generalized Gaussian components as above, but also that each input patch location corresponds to a different mixture (priors) of these components.  This makes sense, as certain templates that are likely to appear in the center of an image for example, may be less likely to appear on the top-left corner of the image, for example.  Using our estimates of the global components obtained during the initialization of the similarity layer, we can estimate a mixture for a certain input patch location, by applying an estimation method to patches only from that location, with the component shapes, means and scales held fixed.  We may then calculate offsets for the $n$ elements of the similarity layer's output that correspond to that location, such that the probabilistic heat maps will take into account the location-dependent statistics, and will be precise (not up to a constant).  For example, if there is a region in the input for which a certain template is very unlikely to appear, that template's heat map in the aforementioned region will be suppressed.  The offsets we compute may serve for initialization of the MEX layer's offsets.

\section{Experiments} \label{sec:exp}
\begin{figure*} 
\includegraphics[width=\textwidth,height=\textheight,keepaspectratio]{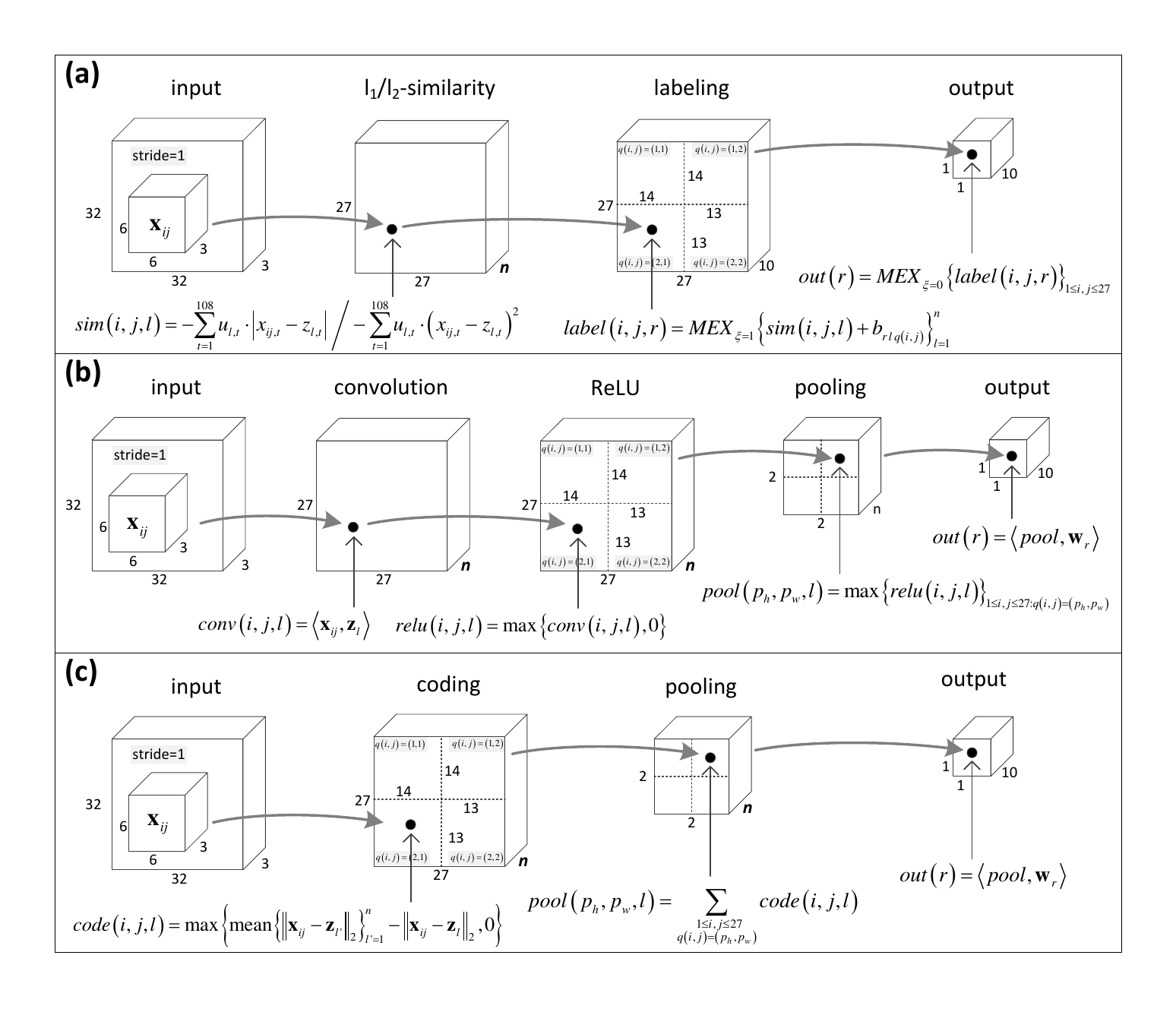}
\vspace{-1cm}
\caption{\footnotesize Networks evaluated on CIFAR-10.~~(a)~Patch labeling SimNet~~(b)~Comparable ConvNet~~(c)~Comparable ``single-layer'' network studied in~\cite{coates2011analysis}.} 
\label{fig:exp_nets}
\end{figure*}
\begin{figure*} 
\includegraphics[width=\textwidth,height=\textheight,keepaspectratio]{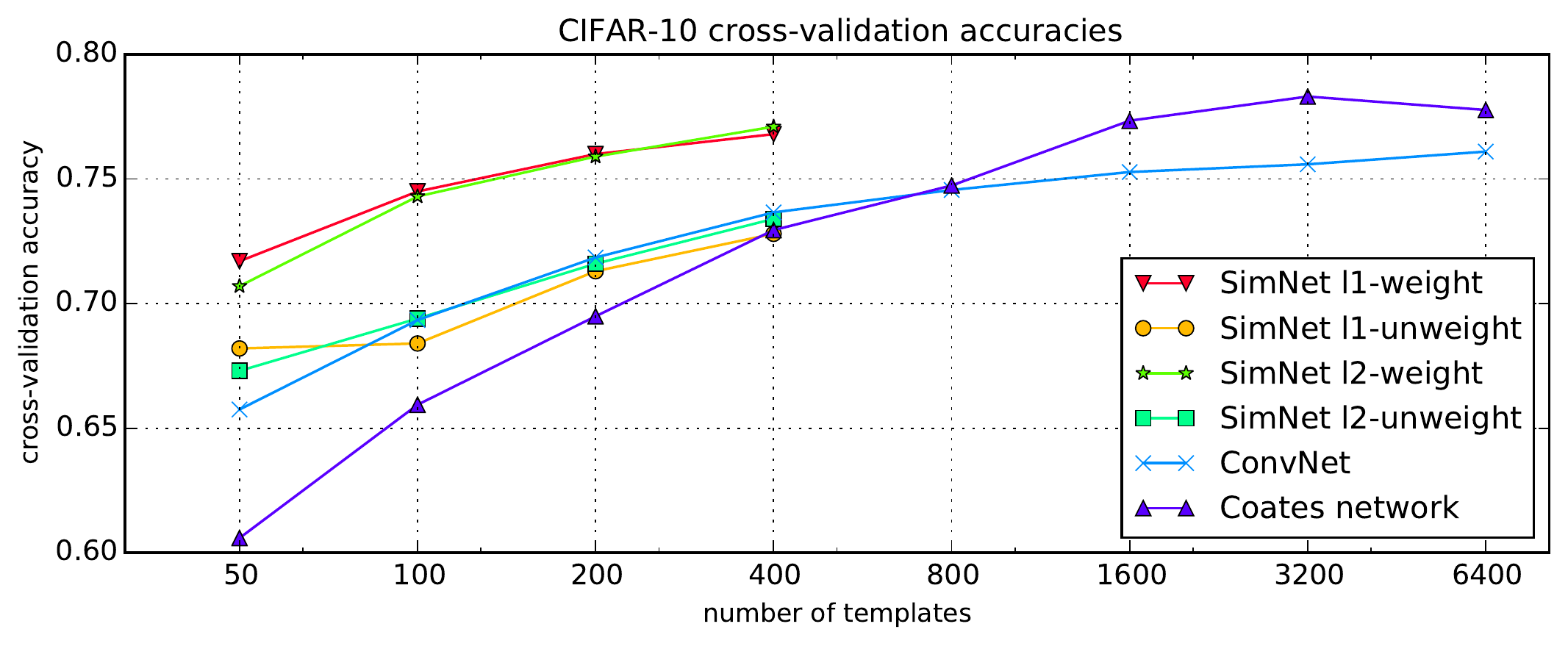}
\vspace{-5mm}
\caption{\footnotesize CIFAR-10 cross-validation accuracies plotted against the number of templates in the networks (denoted $n$ in fig.~\ref{fig:exp_nets}).  `SimNet l1-weight' and `SimNet l2-weight' correspond to the network structure illustrated in fig.~\ref{fig:exp_nets}(a), with $l_1$ and $l_2$ similarities respectively; 'SimNet l1-unweight' and 'SimNet l2-unweight' correspond to the same networks, but with the weight vectors $\uu_l$ held fixed during training; 'ConvNet' corresponds to the network illustrated in fig.~\ref{fig:exp_nets}(b); 'Coates network' corresponds to the network illustrated in fig.~\ref{fig:exp_nets}(c).}
\label{fig:benchmark_results_templates}
\end{figure*}
\begin{figure*} 
\includegraphics[width=\textwidth,height=\textheight,keepaspectratio]{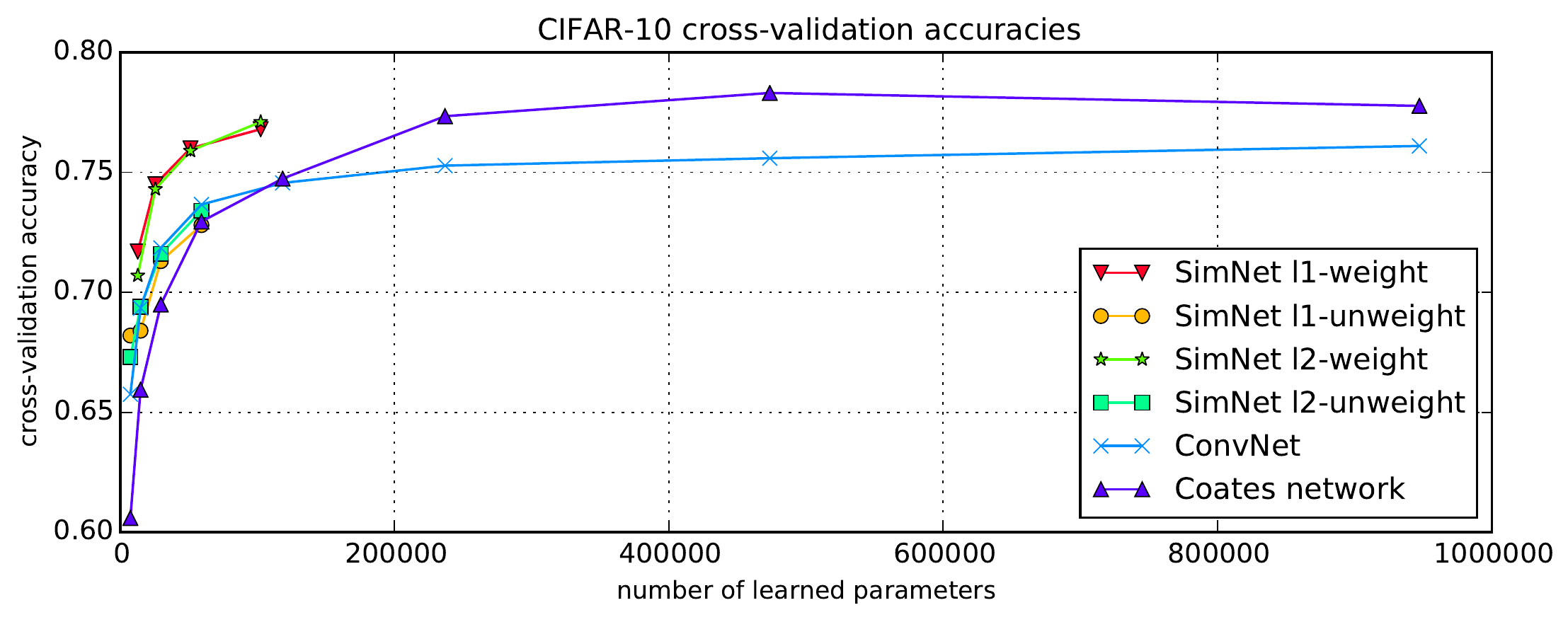}
\vspace{-5mm}
\caption{\footnotesize CIFAR-10 cross-validation accuracies plotted against the number of learned parameters in the networks.  This is merely a different display of the results given in fig.~\ref{fig:benchmark_results_templates}.  Notice how with weighted similarities, the SimNet reaches approximately the same level of performance as the competition, using much smaller networks.}
\label{fig:benchmark_results_params}
\end{figure*}

We implemented the ``patch labeling'' SimNet discussed in sec.~\ref{sec:other_simnets}, and experimented with the specific architectural settings illustrated in fig.~\ref{fig:exp_nets}(a).  The network consists of a $l_p$-similarity layer with $p$ fixed at $1$ or $2$, followed by two MEX layers.  Implementation and evaluation of deeper SimNets is currently under work, and will be reported at a later time.  For the experiments reported here, we used the CIFAR-10 dataset~(\cite{krizhevsky2009learning}), which consists of $60,000$ color images ($50,000$ for training and $10,000$ for testing) of size $32\times32$ partitioned into $10$ classes, with $6,000$ images per class ($5,000$ for training, $1,000$ for testing).  The network's input is an RGB image ($32\times32\times3$ array), processed by patches of size $6\times6\times3$ with a single-pixel stride between them.  For a given number of templates in the similarity layer (denoted by $n$), the SimNet's learned parameters are the templates $\z_1,...,\z_n\in\R^{108}$, the similarity weights $\uu_1,...,\uu_n\in\R_+^{108}$, and the MEX offsets $b_{r l p_h p_w}\in\R$ with $r=1,...,10$, $l=1,...,n$ and $p_w,p_h=1,2$.  We used statistical estimation as described in sec.~\ref{sec:init} to initialize the templates $\z_l$ and the similarity weights $\uu_l$ (initialization was based on training images, without making use of their labels).  The network was then trained by minimizing a softmax loss with stochastic gradient descent (SGD) that includes momentum and acceleration~(\cite{sutskever2013importance}).  For SGD, we used a batch size of $64$, a momentum of $0.9$, and a learning rate of $0.01$ decreased by a factor of $10$ after $50$ epochs, running $100$ epochs in total.  The weight decay for the templates was set to zero, and those for the similarity weights and offsets were set equal to each other, their value chosen via cross-validation.  

We compared the SimNet to instances of two learning architectures.  The first is a ConvNet with a single convolutional layer followed by a pooling layer followed by an output layer (see illustration in fig.~\ref{fig:exp_nets}(b)).  The purpose of this comparison is to evaluate the SimNet against an analogous ConvNet, measuring the network sizes (number of learned parameters) required to reach given accuracies.  A successful outcome here would be if the SimNet reached the same (or higher) level of performance as the ConvNet, with considerably smaller network size.  The second comparison we held was against the ``single-layer'' network studied by Coates et al.~(\cite{coates2011analysis}), which has the same depth as the evaluated SimNet, and whose performance on CIFAR-10 is one of the best reported for networks of such depth (absolute state of the art in 2011).  In~\cite{coates2011analysis}, a number of unsupervised learning methods were devised for ``coding'' the input image.  The coding methods were based on ``templates'', such that each template corresponded to a single feature map.  The feature maps were passed on to a sum-pooling operator, and from there a linear SVM was learned using supervised data.  Many coding methods were experimented on, and the one that produced the best results, referred to as ``triangle'' coding, was a ``soft'' Euclidean measure applied to templates learned via k-means.  This coding method, along with the other architectural settings that produced the best results, are illustrated in fig.~\ref{fig:exp_nets}(c).  Finally, we question the importance of the unsupervised initialization scheme described in sec.~\ref{sec:init}, by training the evaluated SimNet with random initialization (as customary with ConvNets), and examining the effect on the cross-validation accuracies.

The results reported below show that the evaluated SimNet achieves performance comparable to that of the ConvNet and the network of Coates et al., with only a fraction of the number of learned parameters.  The unsupervised initialization scheme indeed boosts performance, and can be viewed as one of the drivers behind the SimNet's superiority.  We are currently working on the optimization of our code (including GPU acceleration), to enable evaluation of larger and deeper SimNets on more meaningful benchmarks, comparing against deep state of the art ConvNets.
\subsection{Benchmarking against the ConvNet}
The ConvNet was implemented using Caffe toolbox~(\cite{Jia13caffe}), with random initialization and SGD training.  We used a batch size of $100$, momentum of $0.95$, and learning rate of $10^{-4}$ decreased by a factor of $10$ every $45$ epochs, running $150$ epochs in total.  The global weight decay and the dense layer's DropOut rate were chosen via cross-validation.  The ConvNet's input was an RGB image ($32\times32\times3$ array) normalized for brightness and contrast.  For the SimNet we also added patch ``whitening'', in accordance with the suggestion of~\cite{coates2011analysis}.  The positive effect of whitening for the SimNet (which has $l_1/l_2$ similarities) was verified experimentally, whereas for the ConvNet, we observed that whitening does not have a positive effect (complying with the observations of~\cite{coates2011analysis}).

Fig.~\ref{fig:benchmark_results_templates} shows the cross-validation accuracies of the evaluated networks as a function of the number of templates ($n$).  Fig.~\ref{fig:benchmark_results_params} plots the same results against the number of learned parameters in the networks.  For the SimNet, we experimented with up to $400$ templates (we believe that more templates would only give marginal improvements in accuracy, and thus did not continue further), and reached accuracies of $76.8\%$ and $77.1\%$ with weighted $l_1$ and $l_2$ similarities respectively.  We ran the ConvNet with up to $6,400$ templates (beyond that Caffe had GPU memory management issues), with the highest accuracy standing at $76.2\%$.  In comparison, taking into account that with weighted similarities each template carries with it a weight vector, the size (number of learned parameters) of the $400$-template SimNet with weighted similarities was less than $1/9$ the size of the $6,400$-template ConvNet, while achieving slightly superior accuracy.  The performance of the SimNet with unweighted similarities on the other hand, is very similar to that of the ConvNet, thus highlighting the importance of the weights in the similarity layer.  The weights double the number of parameters in the layer, but the increase in performance scales up super-linearly with the number of added parameters.  In other words, weights provide a gain in accuracy which is much higher than what would be obtained by simply adding more templates until reaching the same network size.  For example, the accuracies with weights at $100$ templates are considerably higher than the accuracies without weights at $200$ templates, despite the fact that in the latter case, the overall network size is higher.

It is worth noting that the performance of the SimNet with unweighted $l_1$ and $l_2$ similarities is comparable to that of the ConvNet.  This confirms what we observed formally in subsec.~\ref{subsec:mlp} -- the hypothesis space (analyzed through the shapes of decision regions) corresponding to unweighted $l_2$-similarity is essentially the same as that which corresponds to linear similarity (convolutional operator).  The hypothesis space corresponding to unweighted $l_1$-similarity is different, but apparently does not provide a higher degree of abstraction (further study of this is deferred to future work).

Although the SimNet accuracies achieved here are not state of the art for this dataset, the results demonstrate the potential of SimNets for modeling learning problems with significant reduction in network sizes compared to ConvNets.
\subsection{Benchmarking against the ``single-layer'' network of Coates et al.}
The ``single-layer'' network studied by Coates et al.~(\cite{coates2011analysis}) is of interest on several accounts.  First, with GMM coding, the network is equivalent to the SimNet variant presented in eqn.~\ref{eqn:simnet_patch_class_2}.  Second, their best result with ``triangle'' coding is one of the highest accuracies on CIFAR-10 reported for networks of this depth (absolute state of the art in 2011).  Third, their observations with respect to the effect of whitening are relevant to the SimNet architecture, and indeed, we found that for the evaluated SimNet with $l_1$ and $l_2$ similarities, whitening makes a difference. 

In~\cite{coates2011analysis}, the network ``templates'' (i.e. the parameters of the selected coding method) were set using unlabeled data, and were not modified in the supervised training phase.  To facilitate a fair comparison against our SimNet (where templates are modified in supervised training), we added an additional supervised training phase, which applied to both the templates and the SVM coefficients.  More specifically, we used SGD to jointly modify the network templates and SVM coefficients produced by~\cite{coates2011analysis}, in an attempt to reach higher accuracy levels than those reported by the authors.  As it turned out, with the triangle coding they proposed, the supervised update of the templates did not improve accuracy any further than the original k-means clustering.  Deep inspection of this phenomena revealed that the k-means clustering (along with the SVM that follows) provides a strong local minima for the learning problem, so even the training accuracy was not improved.  This leads us to believe that the triangle coding is so successful precisely because it creates a representation for which k-means finds optimal templates, that cannot be improved even in the presence of labeled data.  In~\cite{coates2011analysis} results are reported for up to $1,600$ templates.  We used their code to reproduce these results, while running up to $6,400$ templates.  The accuracy curve we obtained is displayed in fig.~\ref{fig:benchmark_results_templates}, with $77.3\%$ for $1,600$ templates, $78.3\%$ for $3,200$ templates, and $77.8\%$ for $6,400$ templates.  The peak accuracy was achieved for $3,200$ templates, and was slightly higher than the SimNet peak accuracy, which stood at $77.1\%$ for weighted $l_2$-similarity and $400$ templates.  The SimNet on the other hand was almost $1/5$ in size (see fig.~\ref{fig:benchmark_results_params}).
\subsection{The importance of unsupervised initialization}
To assess the importance of the SimNets' unsupervised initialization scheme presented in sec.~\ref{sec:init}, we trained the evaluated SimNet (fig.~\ref{fig:exp_nets}(a)) with weighted $l_1$ similarity, using no data for initialization.  In particular, we initialized the templates $\z_1,...,\z_n$ randomly with a zero-mean unit-variance Gaussian distribution (in accordance with the fact that the input patches are whitened to have zero-mean and unit variance), and the weights $\uu_1,...,\uu_n$ with constant ones.  Besides the difference in initialization, the SimNet was trained exactly as described above.  Running the experiment with $200$ templates, cross-validation accuracy dropped from $76\%$ to $74.1\%$.  With $400$ templates, accuracy declined from $76.8\%$ to $74.4\%$.  With $50$ and $100$ templates, the learning algorithm did not converge.  We conclude that the SimNet unsupervised initialization scheme indeed has significant impact on performance.  The impact is especially acute for networks of small size.  This complies with conventional wisdom, according to which training small networks poses more difficult optimization problems.  The SimNet initialization scheme may provide an alternative to the common practice of over-specifying networks (constructing networks larger than necessary in order to ease the optimization task).

\section{Discussion}
We presented a deep layered architecture called SimNets, with similar ingredients as classical ConvNets.  The architecture is driven by two operators: \emph{(i)} the similarity operator, which is a generalization of the convolutional operator in ConvNets, and \emph{(ii)} the MEX operator, which can realize classical operators found in ConvNets like ReLU and max pooling, but has additional capabilities that make SimNets a powerful generalization of ConvNets.  One of the interesting properties of the SimNet architecture is that applying its two operators in succession -- similarity followed by MEX, results in what can be viewed as an artificial neuron in a high-dimensional feature space.  Moreover, the multilayer perceptron construction of input to hidden layer to output, as well as the fundamental building block incorporating locality, sharing and pooling, are both generalizations of kernel machines.  

We described two possible similarity measures: the $l_p$ similarity, which in its unweighted version gives rise to the Generalized Gaussian kernel, and the linear similarity, which is the operator found in ConvNets, and gives rise to the Exponential kernel.  We also showed that the full specification of the $l_p$ similarity operator, which includes weights, goes beyond a kernel machine and carries with it a higher abstraction level than what a convolutional layer can express.  Another interesting property of the SimNet architecture is that statistical estimation methods for Generalized Gaussian mixture distributions can be used for unsupervised initialization of network parameters.  These initializations arise naturally from standard statistical assumptions, having the potential of employing unsupervised learning in an effective manner as part of deep learning.

Implementing deep SimNets with state of the art optimization techniques (including GPU acceleration) is an ongoing effort, but we were able to implement a basic SimNet and conduct benchmarks comparing it against two networks of the same depth -- an analogous ConvNet and the ``single-layer'' network of~\cite{coates2011analysis}.  The results demonstrate that a SimNet can achieve comparable and/or better accuracy, while requiring a significantly smaller network (in terms of the number of learned parameters) -- around $1/9$ the size of the ConvNet and $1/5$ the size of the network in~\cite{coates2011analysis}.

The SimNet architecture departs from classical ConvNets in three main respects.  First, the similarity layer can incorporate entry-wise weights when the $l_p$ similarity is used.  With linear similarity (which is essentially an inner-product between an input patch and a convolutional kernel) incorporating weights is meaningless, as they blend into the convolutional kernels.  We saw that the unweighted $l_p$ and linear similarities give rise to a kernel-SVM building block with the Generalized Gaussian and Exponential kernels, respectively.  The weighted $l_p$ similarity on the other hand, cannot be realized in the kernel-SVM framework (thm.~\ref{thm:l_p_sim_wgt_no_K}), thereby offering a potentially stronger building block (whose effect is described in more detail in subsec.~\ref{subsec:mlp}).  The experiments we carried out highlight the differences between weighted and unweighted $l_p$ similarities:
\begin{itemize}
\item Without weights, $l_p$ similarity and the linear similarity (convolutional operator) give rise to comparable performance.  This suggests that without weights, SimNets do not exhibit superiority over ConvNets.
\item When weights are included, $l_p$ similarity displays a significant increase in performance, which scales up super-linearly with the number of parameters.  That is to say, the increase in accuracy cannot be explained merely by the fact that the number of parameters in the similarity layer has been doubled (weights on top of templates).
\end{itemize}
These findings suggest that the strength of having the basic building block go beyond the hypothesis space of a kernel-SVM, has significant appeal in practice. 

The second respect in which SimNets depart from ConvNets has to do with the ability of the MEX layer to incorporate offsets.  When the MEX layer serves as the final layer of the network, these offsets play the role of classification coefficients.  However, when the MEX layer is inserted as a pooling layer, the offsets can be interpreted as providing locality-based biases to the templates generated in a previous similarity layer.  This is something that classical ConvNets cannot express.  Evaluating the practical significance of the MEX offsets requires experimentation with deep layered SimNets, which is an ongoing effort.

The third departure (or distinction) from ConvNets, is that the SimNet architecture is endowed with a natural initialization based on unlabeled data.  In the case of ConvNets, existing unsupervised initialization schemes have little to no advantage over random initializations.  For the SimNets, we reported experimental results that demonstrate the superiority of the unsupervised initialization scheme over random initializations, showing that the effect is more acute when the networks are small.  Besides its aid in training, the unsupervised scheme proposed also has the potential of determining the number of channels for a similarity layer based on variance analysis of patterns generated from previous layers.  This implies that the structure of SimNets can potentially be determined automatically from (unlabeled) training data.   

Future work is focused on further implementation, with the purpose of creating an open programming environment for the research community, that will enable wider scale experimentation of SimNets.  Further theoretical studies are ongoing as well, with the intent to capture the sample complexity of SimNets, and to gain a better understanding of the typical network structure and size required under different conditions.

\subsubsection*{Acknowledgments}
The authors would like to thank Nitzan Guberman and Or Sharir for their dedicated contribution to the experiments carried out in this work.  The work is partly funded by Intel grant ICRI-CI no. 9-2012-6133 and by ISF Center grant 1790/12.

\subsubsection*{References}
{
\bibliographystyle{plainnat}
\bibliography{refs.bib}
}

\clearpage
\appendix
\section{Proof of theorem~\ref{thm:l_p_sim_wgt_no_K}} \label{app:proof_l_p_sim_wgt_no_K}
To prove the theorem, we will need the following definition and lemma: 
\\
\begin{definition} \label{def:eps_ortho_set}
Let $\HH$ be a Hilbert space and $S\subset\HH$ be a collection of vectors.  Given a constant $\epsilon>0$, $S$ is said to be an \emph{$\epsilon$-orthonormal set} if the following two conditions hold:
\beas
&\forall \vv\in S:~\norm{\vv}_\HH=1& \\
&\forall \vv,\vv'\in S~,~\vv\neq\vv':~\abs{\inprod{\vv}{\vv'}_\HH}\leq\epsilon&
\eeas
where $\inprod{\cdot}{\cdot}_\HH$ stands for the inner-product in $\HH$ and $\norm{\cdot}_\HH$ denotes the induced norm. \\
\end{definition}

\begin{lemma} \label{lemma:inf_ortho_set_prod_0}
Let $\HH$ be a Hilbert space over $F$ ($F=\R$ or $F=\C$) and $V\subset\HH$ be a set that contains an $\epsilon$-orthonormal subset (see def.~\ref{def:eps_ortho_set}) of size $n$ for any constants $\epsilon>0$ and $n\in\N$.  Then, for every vector $\uu\in\HH$ it holds that $\inf\left\{\abs{\inprod{\uu}{\vv}_\HH}:\vv\in V\right\}=0$.
\end{lemma}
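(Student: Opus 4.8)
The plan is to prove an approximate Bessel inequality for $\epsilon$-orthonormal sets and then extract from it, for any prescribed tolerance, a single vector of $V$ whose inner product with $\uu$ is small. If $\uu=\0$ the claim is immediate, since every inner product vanishes, so I assume $\uu\neq\0$. Fix arbitrary $\epsilon>0$ and $n\in\N$, and let $\{\vv_1,\dots,\vv_n\}\subset V$ be an $\epsilon$-orthonormal subset, which exists by hypothesis. Writing $a_i:=\inprod{\uu}{\vv_i}_\HH$ and $S:=\sum_{i=1}^n\abs{a_i}^2$, the whole argument hinges on controlling $S$ from above.

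First I would test $\uu$ against the linear combination $\w:=\sum_{i=1}^n a_i\vv_i$. On one hand, using $\inprod{\vv_i}{\uu}_\HH=\overline{a_i}$, we get $\inprod{\w}{\uu}_\HH=\sum_{i=1}^n a_i\overline{a_i}=S$, which is real and nonnegative, so Cauchy--Schwarz yields $S\le\norm{\w}_\HH\,\norm{\uu}_\HH$. On the other hand, expanding $\norm{\w}_\HH^2=\sum_{i,j}a_i\overline{a_j}\inprod{\vv_i}{\vv_j}_\HH$ and separating the diagonal (which equals $S$ because each $\vv_i$ is a unit vector) from the off-diagonal terms (each bounded in magnitude by $\epsilon$), I would estimate $\norm{\w}_\HH^2\le S+\epsilon\sum_{i\neq j}\abs{a_i}\abs{a_j}\le(1+\epsilon n)S$, where the final step uses $(\sum_i\abs{a_i})^2\le n\,S$ by Cauchy--Schwarz. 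Combining the two bounds gives $S^2\le\norm{\uu}_\HH^2(1+\epsilon n)\,S$, hence the approximate Bessel inequality $\sum_{i=1}^n\abs{\inprod{\uu}{\vv_i}_\HH}^2\le\norm{\uu}_\HH^2(1+\epsilon n)$ (trivially valid also when $S=0$).

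The rest is an averaging-and-limit argument. Since the smallest of the $n$ nonnegative terms cannot exceed their average, some index $i$ satisfies $\abs{\inprod{\uu}{\vv_i}_\HH}^2\le S/n\le\norm{\uu}_\HH^2(\tfrac1n+\epsilon)$. As this $\vv_i$ lies in $V$, it follows that $\inf\{\abs{\inprod{\uu}{\vv}_\HH}:\vv\in V\}\le\norm{\uu}_\HH\sqrt{\tfrac1n+\epsilon}$. Because the hypothesis furnishes such subsets for every $\epsilon>0$ and every $n\in\N$, I may let $\epsilon\to0$ and $n\to\infty$, driving the right-hand side to $0$; since the infimum is manifestly nonnegative, it equals $0$, as claimed.

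I expect the main obstacle to be the off-diagonal estimate in the expansion of $\norm{\w}_\HH^2$: the crude bound on $\sum_{i\neq j}\abs{a_i}\abs{a_j}$ produces a factor of $n$ that must be absorbed by $\epsilon$, so the product $\epsilon n$ is what needs to be kept under control. This is precisely why the hypothesis supplies $\epsilon$-orthonormal sets of \emph{every} size for \emph{every} $\epsilon$ — the limit must be taken in $n$ and $\epsilon$ jointly, and one is free to send $\epsilon\to0$ fast enough (indeed $\epsilon\to0$ with $n\to\infty$ independently already suffices here) to make both $\tfrac1n$ and $\epsilon$ vanish. Everything else — Cauchy--Schwarz, the unit-norm diagonal, the trivial $\uu=\0$ and $S=0$ cases, and the passage to the infimum over all of $V$ — is routine, and the complex case differs from the real one only in the conjugations, which I have already absorbed by working with $\inprod{\w}{\uu}_\HH$.
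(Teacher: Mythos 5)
Your proof is correct, and it takes a genuinely different route from the paper's. You establish an approximate Bessel inequality directly: testing $\uu$ against $\w=\sum_i\inprod{\uu}{\vv_i}_\HH\,\vv_i$ and applying Cauchy--Schwarz twice yields $\sum_{i=1}^n\abs{\inprod{\uu}{\vv_i}_\HH}^2\le\norm{\uu}_\HH^2(1+\epsilon n)$, after which a pigeonhole/averaging step produces a single $\vv_i\in V$ with $\abs{\inprod{\uu}{\vv_i}_\HH}^2\le\norm{\uu}_\HH^2\left(\tfrac1n+\epsilon\right)$, and the two free parameters drive the infimum to zero. The paper instead projects $\uu$ onto $\mathrm{span}\{\vv_i\}_{i=1}^n$, encodes the geometry in the Gram matrix $G$, bounds the quadratic forms $\alphabf^*G\alphabf$ and $\alphabf^*G^*G\alphabf$ via the extreme eigenvalues $\lambda_n$ and $\lambda_1^2$, and pins those eigenvalues near $1$ with Gershgorin's circle theorem; combining the bounds and letting $\epsilon\to0^+$ gives $\norm{\uu}_\HH^2\geq c^2 n$ with $c:=\inf\{\abs{\inprod{\uu}{\vv}_\HH}:\vv\in V\}$, forcing $c=0$. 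The quantitative content is essentially the same ($c^2=O(\norm{\uu}_\HH^2/n)$ as $\epsilon\to0$), but your argument buys two things: it is entirely elementary (no spectral estimates, no Gershgorin), and your bound holds for \emph{all} $\epsilon>0$ and $n$, with no analogue of the paper's restriction $\epsilon<\frac{1}{n-1}$ needed there to keep $G$ positive definite before dividing by $\lambda_n$ and $\lambda_1^2$ --- which also substantiates your closing observation that $\epsilon$ and $n$ need not be coupled. One structural difference worth noting: the paper lower-bounds \emph{every} $\abs{\inprod{\uu}{\vv_j}_\HH}$ by $c$ simultaneously (which is why it gets $c^2\cdot n$ on the right-hand side of its key inequality), whereas you bound only the minimum over the chosen subset and then pass to the infimum over $V$; both are valid, and yours slightly simplifies the bookkeeping. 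The only convention-level caveat in your write-up is that $\inprod{\w}{\uu}_\HH=S$ presumes linearity of the inner product in its first slot; under the opposite convention one takes $\w=\sum_i\overline{\inprod{\uu}{\vv_i}_\HH}\,\vv_i$ instead, so nothing substantive changes.
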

\begin{proof}
Let $\uu\in\HH$ be an arbitrary vector, and denote $c:=\inf\left\{\abs{\inprod{\uu}{\vv}_\HH}:\vv\in V\right\}$ and $M:=\norm{\uu}_\HH$.  We would like to show that $c=0$.  Let $\epsilon>0$ and $n\in\N$ be arbitrary constants.  Given our assumption on $V$, we may choose an $\epsilon$-orthonormal subset $\vv_1,...,\vv_n\in V$.  Denote by $G$ the Gram of $\vv_1,...,\vv_n$, i.e. $G\in F^{n,n}$ is the positive semi-definite (PSD) matrix with entries $G_{ij}=\inprod{\vv_j}{\vv_i}_\HH$.  Let $\alpha_1,...,\alpha_n\in F$ be the scalars such that $\sum_{i=1}^n\alpha_i\vv_i$ is the projection of $\uu$ onto $span\{\vv_i\}_{i=1}^n$.  It then holds that:
\beas
&\norm{\sum_{i=1}^n\alpha_i\vv_i}_\HH^2\leq\norm{\uu}_\HH^2=M^2& \\
&\forall j\in\{1,...,n\}:\abs{\inprod{\sum_{i=1}^n\alpha_i\vv_i}{\vv_j}_\HH}=\abs{\inprod{\uu}{\vv_j}_\HH}\geq c& 
\eeas
If we denote $\alphabf:=[\alpha_1,...,\alpha_n]^\top$, and let $\1$ be the $n$-dimensional vector holding 1 in all entries, the above yields the following matrix inequalities:
\bea
&\alphabf^*G\alphabf\leq M^2& \label{eqn:G_form_up_bnd} \\
&\abs{G\alphabf}\geq c\cdot\1\implies\alphabf^*G^*G\alphabf=\norm{G\alphabf}_2^2\geq c^2\cdot n& \label{eqn:GG_form_low_bnd}
\eea
where $*$ stands for the conjugate transpose operator.  The matrix $G$ is PSD, thus having $n$ non-negative eigenvalues.  We denote these by $\lambda_1\geq...\geq\lambda_n\geq0$.  $G$ is Hermitian and thus $G^*G=G^2$, from which we readily conclude that $\lambda_1^2\geq...\geq\lambda_n^2\geq0$ are the eigenvalues of $G^*G$.  We thus have the following inequalities:
\bea
&\alphabf^*G\alphabf\geq\lambda_n\cdot\norm{\alphabf}_2^2& \label{eqn:G_form_low_bnd} \\
&\alphabf^*G^*G\alphabf\leq\lambda_1^2\cdot\norm{\alphabf}_2^2& \label{eqn:GG_form_up_bnd}
\eea
Combining the inequality~\ref{eqn:G_form_up_bnd} with~\ref{eqn:G_form_low_bnd}, and the inequality~\ref{eqn:GG_form_low_bnd} with~\ref{eqn:GG_form_up_bnd}, we get the following:
\bea
\lambda_n\cdot\norm{\alphabf}_2^2&\leq&M^2 \label{eqn:lambda_alpha_up_bnd} \\
\lambda_1^2\cdot\norm{\alphabf}_2^2&\geq&c^2\cdot n \label{eqn:lambda_alpha_low_bnd}
\eea
We now apply Gershgorin's circle theorem (see~\cite{golub2012matrix}) to $G$.  The theorem states that for each eigenvalue $\lambda_i$, there exists some $j\in\{1,...,n\}$ such that:
$$ \abs{\lambda_i-G_{jj}}\leq\sum_{j'\in\{1,...n\},j'\neq j}\abs{G_{jj'}} $$
Plugging in the definition of $G$ and the $\epsilon$-orthonormality of $\{\vv_1,...,\vv_n\}$, we get:
\beas
&\abs{\lambda_i-\underbrace{\inprod{\vv_j}{\vv_j}}_{=1}}\leq\sum\limits_{j'\in\{1,...n\},j'\neq j}\underbrace{\abs{\inprod{\vv_{j'}}{\vv_j}}}_{\leq\epsilon}\leq(n-1)\cdot\epsilon&  \\ 
&\implies1-(n-1)\cdot\epsilon\leq\lambda_i\leq1+(n-1)\cdot\epsilon& 
\eeas
Recall that $\epsilon>0$ and $n\in\N$ were chosen arbitrarily.  If we now limit $\epsilon$ to be smaller than $\frac{1}{n-1}$, we ensure that $\lambda_i>0$ for all $i=1,...,n$.  We can thus divide by $\lambda_n$ and $\lambda_1^2$ the inequalities~\ref{eqn:lambda_alpha_up_bnd} and~\ref{eqn:lambda_alpha_low_bnd} respectively, and reach: 
\beas
&\norm{\alphabf}_2^2\leq\frac{M^2}{\lambda_n}\leq\frac{M^2}{1-(n-1)\cdot\epsilon}& \\
&\norm{\alphabf}_2^2\geq\frac{c^2\cdot n}{\lambda_1^2}\geq\frac{c^2\cdot n}{(1+(n-1)\cdot\epsilon)^2}&
\eeas
Combining these two inequalities, we get:
$$ M^2\geq\frac{1-(n-1)\cdot\epsilon}{(1+(n-1)\cdot\epsilon)^2}\cdot c^2\cdot n$$
Now this holds for $\epsilon$ arbitrarily small, so in particular:
\be
M^2\geq\underbrace{\left(\lim_{\epsilon\to\0^+}\frac{1-(n-1)\cdot\epsilon}{(1+(n-1)\cdot\epsilon)^2}\right)}_{=1}\cdot c^2\cdot n=c^2\cdot n
\label{eqn:M_c_bnd}
\ee
$n$ is an arbitrary natural number, and $M$ was defined as the norm of $\uu\in\HH$ so in particular it is non-negative and finite.  In addition, $c$ was defined as $\inf\left\{\abs{\inprod{\uu}{\vv}_\HH}:\vv\in V\right\}$ so it is too non-negative.  Thus, the only way that eqn.~\ref{eqn:M_c_bnd} can hold is if $c=0$, which is what we set out to prove.
\end{proof}

Equipped with def.~\ref{def:eps_ortho_set} and lemma~\ref{lemma:inf_ortho_set_prod_0}, we head on to prove our main theorem:
\begin{proof}
[Proof of theorem~\ref{thm:l_p_sim_wgt_no_K}] Assume by contradiction that there are mappings $Z$ and $U$ and kernel $K$ as described in the theorem.  Let $\psi$ be a feature mapping corresponding to $K$, i.e. a mapping from $\R^d\times\R_+^d$ to some real Hilbert space $\HH$ such that $K([\z,\uu],[\z',\uu'])=\inprod{\psi([\z,\uu])}{\psi([\z',\uu'])}_\HH$ for all $\z,\z'\in\R^d$ and $\uu,\uu'\in\R_+^d$.  Fix some $\x\in\R^d$, and observe that:
\be
\norm{\psi([Z(\x),U(\x)])}_\HH\leq1
\label{eqn:x_map_small_norm}
\ee
This follows from:
\beas
&\norm{\psi([Z(\x),U(\x)])}_\HH^2=& \\
&K([Z(\x),U(\x)],[Z(\x),U(\x)])=& \\
&\exp\left\{-\underbrace{c}_{>0}\sum_{i=1}^d\underbrace{U(\x)_i}_{\geq0}\underbrace{\abs{x_i-Z(\x)_i}^p}_{\geq0}\right\}\leq1&
\eeas
Since $Z(\x)$ is also an element in $\R^d$, we can replace $\x$ by $Z(\x)$ in eqn.~\ref{eqn:x_map_small_norm} to obtain:
\be
\norm{\psi([Z(Z(\x)),U(Z(\x))])}_\HH\leq1
\label{eqn:Zx_map_small_norm}
\ee
Next we show that:
\be
\inprod{\psi([Z(Z(\x)),U(Z(\x))])}{\psi([Z(\x),U(\x)])}_\HH=1
\label{eqn:x_map_Zx_map_large_prod}
\ee
Indeed:
\beas
&\inprod{\psi([Z(Z(\x)),U(Z(\x))])}{\psi([Z(\x),U(\x)])}_\HH=& \\
&K([Z(Z(\x)),U(Z(\x))],[Z(\x),U(\x)])=& \\
&\exp\left\{-c\sum_{i=1}^d U(\x)_i\underbrace{\abs{Z(\x)_i-Z(\x)_i}^p}_{=0}\right\}=1&
\eeas
The Cauchy-Schwartz inequality (see~\cite{young1988introduction}) tells us that for any two vectors $\w,\w'$ in the real Hilbert space $\HH$, it holds that $\inprod{\w}{\w'}_\HH\leq\norm{\w}_\HH\cdot\norm{\w'}_\HH$.  Moreover, if equality holds then $\w$ and $\w'$ are linearly dependent, or more specifically, at least one of the vectors can be obtained by multiplying the other by a non-negative scalar.  Applying this to the vectors $\psi([Z(\x),U(\x)])$ and $\psi([Z(Z(\x)),U(Z(\x))])$, we conclude from equations~\ref{eqn:x_map_small_norm},~\ref{eqn:Zx_map_small_norm} and~\ref{eqn:x_map_Zx_map_large_prod} that:
\bea
\psi([Z(Z(\x)),U(Z(\x))])&=&\psi([Z(\x),U(\x)]) \label{eqn:x_map_Zx_map_equal} \\
\norm{\psi([Z(\x),U(\x)])}_\HH&=&1 \label{eqn:x_map_norm_1}
\eea
Using eqn.~\ref{eqn:x_map_Zx_map_equal} and our assumption about the kernel $K$ (eqn.~\ref{eqn:l_p_sim_wgt_K_req}), we conclude that for every $\z\in\R^d$ and $\uu\in\R_+^d$ (recall that $\x\in\R^d$ was fixed arbitrarily):
\beas
&\exp\left\{-c\sum_{i=1}^d u_i\abs{x_i-z_i}^p\right\}=& \\
&K\left([Z(\x),U(\x)],[\z,\uu]\right)=& \\
&\inprod{\psi([Z(\x),U(\x)])}{\psi([\z,\uu])}_\HH=& \\
&\inprod{\psi([Z(Z(\x)),U(Z(\x))])}{\psi([\z,\uu])}_\HH=& \\
&\exp\left\{-c\sum_{i=1}^d u_i\abs{Z(\x)_i-z_i}^p\right\}&
\eeas
Taking the logarithm of the two outer expressions, we get:
\beas
-c\sum_{i=1}^d u_i\abs{x_i-z_i}^p=-c\sum_{i=1}^d u_i\abs{Z(\x)_i-z_i}^p \\
\implies \sum_{i=1}^d u_i\left(\abs{x_i-z_i}^p-\abs{Z(\x)_i-z_i}^p\right)=0
\eeas
Fixing some coordinate $i_0\in\{1,...,d\}$, we can choose $\uu$ to hold $1$ at $i_0$ and $0$ in the other coordinates.  The latter equality would then reduce to $\abs{x_{i_0}-z_{i_0}}^p=\abs{Z(\x)_{i_0}-z_{i_0}}^p$, which must hold for any $z_{i_0}\in\R$.  The only way for this to be met is if $Z(\x)_{i_0}=x_{i_0}$.  Since both the vector $\x\in\R^d$ and the coordinate $i_0\in\{1,...,d\}$ are arbitrary, the mapping $Z:\R^d\to\R^d$ is no other than the identity mapping.  The assumption in eqn.~\ref{eqn:l_p_sim_wgt_K_req} thus becomes:
\bea
&\forall~\z,\x\in\R^d,\uu\in\R_+^d:& \label{eqn:l_p_sim_wgt_K_req_modf} \\
&K\left([\x,U(\x)],[\z,\uu]\right)=\exp\left\{-c\sum_{i=1}^d u_i\abs{x_i-z_i}^p\right\}& \nonumber
\eea
We again fix $\x\in\R^d$, and turn to show that $U(\x)\neq\0$ ($\0$ here stands for the $d$-dimensional zero vector).  Assume by contradiction that this is not the case, i.e. that $U(\x)=\0$.  Then, according to eqn.~\ref{eqn:l_p_sim_wgt_K_req_modf}, for all $\x'\in\R^d$ we have:
\beas
&\inprod{\psi([\x',U(\x')])}{\psi([\x,U(\x)])}_\HH=& \\
&K([\x',U(\x')],[\x,U(\x)])=& \\
&\exp\left\{-c\sum_{i=1}^d \underbrace{U(\x)_i}_{=0}\abs{x_i-x'_i}^p\right\}=1&
\eeas
Using the fact that $\psi([\x,U(\x)])$ and $\psi([\x',U(\x')])$ are unit vectors (eqn.~\ref{eqn:x_map_norm_1}), and again the Cauchy-Schwartz inequality, we conclude that $\psi([\x,U(\x)])=\psi([\x',U(\x')])$.  This implies that for all $\z\in\R^d$ and $\uu\in\R_+^d$:
\beas
&\exp\left\{-c\sum_{i=1}^d u_i\abs{x_i-z_i}^p\right\}=& \\
&K\left([\x,U(\x)],[\z,\uu]\right)=\inprod{\psi([\x,U(\x)])}{\psi([\z,\uu])}_\HH=& \\
&\inprod{\psi([\x',U(\x')])}{\psi([\z,\uu])}_\HH=K\left([\x',U(\x')],[\z,\uu]\right)=& \\
&\exp\left\{-c\sum_{i=1}^d u_i\abs{x'_i-z_i}^p\right\}&
\eeas
As before, we can isolate the coordinates in $\{1,...,d\}$ one at a time, and conclude that $\x=\x'$.  Since $\x'$ is arbitrary, this is of course a contradiction, showing that our assumption $U(\x)=\0$ was incorrect.  There is thus at least one coordinate of $U(\x)$ which is positive.  Accordingly, the expression $-c\sum_{i=1}^d U(\x)_i\abs{x'_i-x_i}^p$ will tend to $-\infty$ when all coordinates of $\x'$ tend to $\infty$ (we denote this condition by $\x'\to{\mathbf\infty}$).  We may thus write:
\beas
&\inprod{\psi([\x',U(\x')])}{\psi([\x,U(\x)])}_\HH=& \\
&\exp\left\{-c\sum_{i=1}^d U(\x)_i\abs{x'_i-x_i}^p\right\}\underset{\x'\to\infty}{\longrightarrow}0&
\eeas
Recall that $\x\in\R^d$ is an arbitrary vector.  The above convergence thus implies that for any $\epsilon>0$ and $n\in\N$, we can incrementally create a set of $n$ vectors - $\x_1,...,\x_n\in\R^d$, such that:
\beas
&\forall1\leq j<i\leq n:& \\
&\abs{\inprod{\psi([\x_i,U(\x_i)])}{\psi([\x_j,U(\x_j)])}_\HH}\leq\epsilon&
\eeas
Indeed, given a set of vectors $\x_1,...,\x_j$, the next vector $\x_{j+1}$ is obtained by approaching $\infty$ until all inner-products are small enough.

To summarize, we have the following findings:
\begin{itemize}
\item For any $\epsilon>0$ and $n\in\N$ there exist $\x_1,...,\x_n\in\R^d$ such that for all $1\leq j<i\leq n$, $\abs{\inprod{\psi([\x_i,U(\x_i)])}{\psi([\x_j,U(\x_j)])}_\HH}\leq\epsilon$.
\item $\norm{\psi([\x,U(\x)])}_\HH=1$ for all $\x\in\R^d$ (eqn.~\ref{eqn:x_map_norm_1}).
\item $\inprod{\psi([\x,U(\x)])}{\psi([\0,\0])}_\HH=K\left([\x,U(\x)],[\0,\0]\right)=\exp\left\{-c\sum_{i=1}^d 0\cdot\abs{x_i-0}^p\right\}=1$ (simply plug-in $\z=\0$ and $\uu=\0$ in eqn.~\ref{eqn:l_p_sim_wgt_K_req_modf}).
\end{itemize}
More succinctly, the set $V:=\{\psi([\x,U(\x)]):\x\in\R^d\}$ contains an $\epsilon$-orthonormal subset (def.~\ref{def:eps_ortho_set}) of size $n$ for any constants $\epsilon>0$ and $n\in\N$, and in addition the vector $\psi([\0,\0])$ has inner-product $1$ with every element of $V$.  According to lemma~\ref{lemma:inf_ortho_set_prod_0} this is impossible!  We have thus reached a contradiction, showing the incorrectness of our initial assumption that mappings $Z$ and $U$ and kernel $K$ as stated in the theorem exist.
\end{proof}

\section{Patch-based kernel-SVM} \label{app:patch_ksvm}
In this appendix we show how the classification described in eqn.~\ref{eqn:simnet_patch_class_3}, which corresponds to the basic ``locality-sharing-pooling'' SimNet illustrated in fig.~\ref{fig:layers_nets}(f), can be formulated as a multiclass kernel-SVM~(\cite{crammer2002algorithmic}) with reduced support-vectors~(\cite{wu2006direct}).  In this formulation, the classified instances will not be represented by holistic vectors, but rather by blocks of multiple vectors.  Moreover, the support-vectors will be subject to constraints which can be interpreted as enforcing ``locality'' and ``sharing''.  In the context of the SimNet, the vectors which constitute an instance are simply the input patches, the locality constraint on the support-vectors corresponds to the fact that the input is processed by local patches in a spatially aware manner, and the sharing constraint corresponds to the fact that the same $n$ templates in the similarity layer apply to all input patches.  As will be shown below, the SimNet's pooling operation will also come into play in the locality and sharing constraints.

Let $d\in\N$ be some dimension, and let $K:\R^d\times\R^d\to\R$ be a kernel on $\R^d$.  For some $D\in\N$, consider the instance space $\X:=\{X=(\x_1,...,\x_D):\x_i\in\R^d~,~i=1,...,D\}$.  For compatibility, we refer to the vectors that constitute an instance as ``patches''.  Assume we have a partitioning of patches into ``pools'', namely that there is a constant $P\in\N$ and a function $q:\{1,...,D\}\to\{1,...,P\}$ that assigns to each patch index $i\in\{1,...,D\}$ a pool $q(i)\in\{1,...,P\}$.  Consider the following rule for classifying an instance $X$ into one of $k\in\N$ possible classes:
\be
\hat{y}(X)=\argmax_{r=1,...,k}\sum_{1\leq p\leq P,1\leq l\leq n}\alpha_{rlp}\sum_{1\leq i\leq D:q(i)=p}K(\x_i,\z_l)
\label{eqn:patch_ksvm_rule}
\ee
where $n\in\N$ is some predetermined constant, $\{\z_1,...,\z_n\}\subset\R^d$ are learned templates, and $\{\alpha_{rlp}\}_{1\leq r\leq k,1\leq l\leq n,1\leq p\leq P}\subset\R$ are learned coefficients.  This is essentially equivalent to the SimNet classification described in eqn.~\ref{eqn:simnet_patch_class_3}.  In fact, the only true difference is that in the latter, the learned coefficients were constrained to be positive, but this does not limit generality, as we can always add a common offset to all coefficients after training is complete.

We now add the special character $*$ (``null'' character) to $\R^d$, extending the latter to $V:=\R^d\cup\{*\}$.  Accordingly, we extend $K$ to the function $K_V:V\times V\to\R$ defined by:
\be
K_V(\vv,\vv')=\left\{
	\begin{array}{ll}
		K(\vv,\vv')  & \mbox{if } \vv,\vv'\neq* \\
		0               & \mbox{otherwise}
	\end{array}
\right.
\label{eqn:K_V_def}
\ee
\begin{lemma} \label{lemma:K_V_kernel}
$K_V$ is a kernel on $V$.
\end{lemma}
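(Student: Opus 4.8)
The plan is to produce $K_V$ as an inner product by extending a feature map for $K$. Since $K$ is assumed to be a kernel on $\R^d$, there exist a real Hilbert space $\HH$ and a mapping $\psi:\R^d\to\HH$ with $K(\vv,\vv')=\inprod{\psi(\vv)}{\psi(\vv')}_\HH$ for all $\vv,\vv'\in\R^d$ (the same feature-map device used earlier in the paper). The key idea is to send the null character $*$ to the zero vector of $\HH$, so that the ``otherwise'' branch of the definition in eqn.~\ref{eqn:K_V_def} is reproduced automatically.

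Concretely, I would define $\psi_V:V\to\HH$ by $\psi_V(\vv)=\psi(\vv)$ for $\vv\in\R^d$ and $\psi_V(*)=\0$, and then verify $\inprod{\psi_V(\vv)}{\psi_V(\vv')}_\HH=K_V(\vv,\vv')$ by splitting into the two cases of eqn.~\ref{eqn:K_V_def}. When both arguments lie in $\R^d$ the inner product equals $K(\vv,\vv')$ by construction of $\psi$; when at least one argument is $*$ the corresponding feature vector is $\0$, so the inner product vanishes and matches the value $0$ assigned by $K_V$ (including the diagonal case $K_V(*,*)=0$, which is consistent with $\inprod{\0}{\0}_\HH=0$). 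Exhibiting such a feature map into a Hilbert space is precisely the defining property of a positive-definite kernel, so this finishes the argument.

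As an alternative route I would verify positive-semidefiniteness of every Gram matrix directly: for any finite selection $\vv_1,\dots,\vv_m\in V$, the matrix $[K_V(\vv_i,\vv_j)]$ has identically zero rows and columns in the positions where $\vv_i=*$, while its remaining principal submatrix is exactly the Gram matrix of $K$ on the non-null points, which is PSD because $K$ is a kernel; padding a PSD matrix with zero rows and columns preserves PSD, so the full matrix is PSD. I expect no genuine obstacle in this lemma — the only thing needing care is the bookkeeping across the two branches of the definition, i.e. confirming that $\psi_V(*)=\0$ correctly encodes \emph{all} cases in which one argument equals $*$, the diagonal $K_V(*,*)$ included.
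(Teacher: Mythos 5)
Your primary argument is exactly the paper's proof: extend a feature map $\psi$ of $K$ to $\psi_V$ by sending $*$ to the zero element $0_\HH$, then check the cases of eqn.~\ref{eqn:K_V_def} to see that $\inprod{\psi_V(\vv)}{\psi_V(\vv')}=K_V(\vv,\vv')$. Both your version and your alternative Gram-matrix verification are correct; the paper simply uses the feature-map route.
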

\begin{proof}
Let $\psi$ be a feature mapping corresponding to the kernel $K$, i.e. $\psi$ is a mapping from $\R^d$ to some Hilbert space $\HH$ such that $\forall\x,\x'\in\R^d:K(\x,\x')=\inprod{\psi(\x)}{\psi(\x')}$.  Extend $\psi$ to the mapping $\psi_V:V\to\HH$ as follows:
$$ \psi_V(\vv)=\left\{
	\begin{array}{ll}
		\psi(\vv)  & \mbox{if } \vv\neq* \\
		0_\HH     & \mbox{if } \vv=*
	\end{array}
\right. $$
where $0_\HH$ stands for the zero element of $\HH$.  Obviously, the function from $V\times V$ to $\R$ defined by $(\vv,\vv')\mapsto\inprod{\psi_V(\vv)}{\psi_V(\vv')}$ is a kernel on $V$.  Direct computation shows that this function is no other than $K_V$:
\beas
&\inprod{\psi_V(\vv)}{\psi_V(\vv')}& \\
&=\left\{\begin{array}{ll}
		\inprod{\psi(\vv)}{\psi(\vv')}   & \mbox{if } \vv\neq*,\vv'\neq*   \\
		\inprod{\psi(\vv)}{0_\HH}       & \mbox{if } \vv\neq*,\vv'=*       \\
		\inprod{0_\HH}{\psi(\vv')}      & \mbox{if } \vv=*    ,\vv'\neq* \\
		\inprod{0_\HH}{0_\HH}          & \mbox{if } \vv=*    ,\vv'=*      
	   \end{array}\right.& \\
&=\left\{\begin{array}{ll}
		\inprod{\psi(\vv)}{\psi(\vv')}   & \mbox{if } \vv\neq*,\vv'\neq*   \\
		0                                       & \mbox{otherwise}
	   \end{array}\right.& \\
&=\left\{\begin{array}{ll}
		K(\vv,\vv')   & \mbox{if } \vv\neq*,\vv'\neq*   \\
		0                & \mbox{otherwise}
	   \end{array}\right.& \\
&=K_V(\vv,\vv')&
\eeas
\end{proof}
Next, we use the kernel $K_V$ to define a function $\K:V^D\times V^D\to\R$, where $V^D$ is the $D$'th Cartesian power of $V$, i.e. $V^D:=\{(\vv_1,...,\vv_D):\vv_i\in V~,~i=1,...,D\}$.  $\K$ is defined by:
\bea
&\K((\vv_1,...,\vv_D),(\vv'_1,...,\vv'_D))=\sum_{1\leq i\leq D} K_V(\vv_i,\vv'_i)& \nonumber \\
&=\sum_{1\leq i\leq D:\vv_i,\vv'_i\neq*} K(\vv_i,\vv'_i)& \label{eqn:Kbf_def}
\eea
\begin{lemma} \label{lemma:Kbf_kernel}
$\K$ is a kernel on $V^D$.
\end{lemma}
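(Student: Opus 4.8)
The plan is to mimic the proof of lemma~\ref{lemma:K_V_kernel} by exhibiting an explicit feature map for $\K$ into a Hilbert space, thereby certifying it as a kernel. Since lemma~\ref{lemma:K_V_kernel} already supplies a feature mapping $\psi_V:V\to\HH$ with $K_V(\vv,\vv')=\inprod{\psi_V(\vv)}{\psi_V(\vv')}$ for all $\vv,\vv'\in V$, I would build the feature space for $\K$ as the $D$-fold direct sum $\HH^D:=\HH\times\cdots\times\HH$, equipped with the inner product $\inprod{(a_1,\dots,a_D)}{(b_1,\dots,b_D)}_{\HH^D}:=\sum_{i=1}^D\inprod{a_i}{b_i}_\HH$. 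This is a standard construction, and since $D$ is finite, $\HH^D$ is again a real Hilbert space under this inner product.

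Next I would define the feature map $\Psi:V^D\to\HH^D$ componentwise by $\Psi(\vv_1,\dots,\vv_D):=(\psi_V(\vv_1),\dots,\psi_V(\vv_D))$. A direct computation then gives, for any two instances $(\vv_1,\dots,\vv_D),(\vv'_1,\dots,\vv'_D)\in V^D$,
\[
\inprod{\Psi(\vv_1,\dots,\vv_D)}{\Psi(\vv'_1,\dots,\vv'_D)}_{\HH^D}=\sum_{i=1}^D\inprod{\psi_V(\vv_i)}{\psi_V(\vv'_i)}_\HH=\sum_{i=1}^D K_V(\vv_i,\vv'_i),
\]
which is exactly $\K$ by its definition in eqn.~\ref{eqn:Kbf_def}. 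Since $\K$ is realized as an inner product of a feature mapping into a Hilbert space, it is by definition a kernel on $V^D$ (symmetry and positive semi-definiteness both being automatic from the inner-product representation), which completes the proof.

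There is no substantial obstacle here; the only points requiring care are verifying that the summed inner product genuinely makes $\HH^D$ a Hilbert space (routine, and needing only the finiteness of $D$) and that the componentwise expansion of the inner product reproduces $\K$ term by term. Equivalently, one could invoke the standard closure property that a finite sum of kernels --- here $K_V$ applied to each of the $D$ coordinates --- is again a kernel, but the explicit feature-map argument keeps the proof self-contained and exactly parallel to lemma~\ref{lemma:K_V_kernel}.
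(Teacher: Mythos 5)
Your proof is correct and follows essentially the same route as the paper's: both define the feature map $\Psi:V^D\to\HH^D$ componentwise via $\psi_V$, equip $\HH^D$ with the summed inner product, and verify by direct computation that $\inprod{\Psi(\cdot)}{\Psi(\cdot)}$ reproduces $\K$ as in eqn.~\ref{eqn:Kbf_def}. Your closing remark that one could instead invoke closure of kernels under finite sums is a fine alternative shortcut, but the explicit feature-map argument you give is exactly the paper's.
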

\begin{proof}
The proof is general in the sense that it does not rely on any specific property of the kernel $K_V$ on which $\K$ is based.  In accordance with the above notations, let $\psi_V$ be a feature mapping corresponding to $K_V$, i.e. a mapping from $V$ to some Hilbert space $\HH$ such that $\forall\vv,\vv'\in V:K_V(\vv,\vv')=\inprod{\psi_V(\vv)}{\psi_V(\vv')}$.  We use $\psi_V$ to define a mapping $\Psi$ from $V^D$ to the Hilbert space $\HH^D$, which is the $D$'th Cartesian power of $\HH$ (the elements of $\HH^D$ are $D$-length sequences of $\HH$-elements, and the inner product between $(\h_1,...,\h_D)$ and $(\h'_1,...,\h'_D)$ is defined as $\sum_{i=1}^D\inprod{\h_i}{\h'_i}$).  The mapping $\Psi$ is defined by $\Psi((\vv_1,...,\vv_D))=(\psi_V(\vv_1),...,\psi_V(\vv_D))$.  The function from $V^D\times V^D$ to $\R$ defined by $((\vv_1,...,\vv_D),(\vv'_1,...,\vv'_D))\mapsto\inprod{\Psi((\vv_1,...,\vv_D))}{\Psi((\vv'_1,...,\vv'_D))}$ is obviously a kernel on $V^D$.  Direct computation shows that this function is no other than $\K$:
\beas
&\inprod{\Psi((\vv_1,...,\vv_D))}{\Psi((\vv'_1,...,\vv'_D))}&  \\
&=\inprod{(\psi_V(\vv_1),...,\psi_V(\vv_D))}{(\psi_V(\vv'_1),...,\psi_V(\vv'_D))}& \\
&=\sum_{i=1}^D\inprod{\psi_V(\vv_i)}{\psi_V(\vv'_i)}=\sum_{i=1}^D K_V(\vv_i,\vv'_i)& \\
&=\K((\vv_1,...,\vv_D),(\vv'_1,...,\vv'_D))&
\eeas
\end{proof}
Using $\K$, we may express the classification rule given in eqn.~\ref{eqn:patch_ksvm_rule} in kernel-form.  Simply notice that:
\beas
&\sum_{1\leq i\leq D:q(i)=p}K(\x_i,\z_l)=& \\
&\K((\x_1,...,\x_D),\overbrace{(\underbrace{*,...,*}_{q(i)\neq p},\underbrace{\z_l,...,\z_l}_{q(i)=p},\underbrace{*,...,*}_{q(i)\neq p})}^{:=Z_{lp}})&
\eeas
Thus, eqn.~\ref{eqn:patch_ksvm_rule} can be written as follows:
$$ \hat{y}(X)=\argmax_{r=1,...,k}\sum_{1\leq p\leq P,1\leq l\leq n}\alpha_{rlp}\cdot\K(X,Z_{lp}) $$
where $Z_{lp}=\left((Z_{lp})_1,...,(Z_{lp})_D\right)$, for $l=1,...,n$ and $p=1,...,P$, are elements in $V^D$ meeting the following constraints:
\be
\forall i\in\{1,...,D\}~s.t.~q(i)\neq p:~(Z_{lp})_i=*
\label{eqn:patch_ksvm_local_constr}
\ee
\vspace{-7mm}
\bea
&\forall i\in\{1,...,D\}~s.t.~q(i)=p:~(Z_{lp})_i=\z_l& \label{eqn:patch_ksvm_share_constr} \\
&\text{for some global vectors } \z_1,...,\z_n\in\R^d& \nonumber
\eea
We interpret the constraint in eqn.~\ref{eqn:patch_ksvm_local_constr} as enforcing locality -- entries of $Z_{lp}$ that lie outside the pool $p$ (``out-pool'' entries) must hold the null character.  The constraint in eqn.~\ref{eqn:patch_ksvm_share_constr} is interpreted as enforcing sharing -- entries of $Z_{lp}$ that lie inside the pool $p$ (``in-pool'' entries) are identical to each other, and also to the in-pool entries of $Z_{l'p'}$ in the case where the ``template indexes'' $l$ and $l'$ are the same. 

To conclude, the classifier described in eqn.~\ref{eqn:patch_ksvm_rule} can also be expressed as:
\be
\hat{y}(X)=\argmax_{r=1,...,k}\sum_{1\leq p\leq P,1\leq l\leq n}\alpha_{rlp}\cdot\K(X,Z_{lp})
\label{eqn:patch_ksvm_rule_modf}
\ee
where:
\begin{itemize}
\item The set $V^D$ is simply the instance space $\X$ with the option of placing null characters in the different entries.
\item $\K$ is a kernel on $V^D$.
\item $Z_{lp}$, with $l=1,...,n$ and $p=1,...P$, are learned elements of $V^D$ meeting the locality and sharing constraints in eqn.~\ref{eqn:patch_ksvm_local_constr} and eqn.~\ref{eqn:patch_ksvm_share_constr} respectively.
\item $\alpha_{rlp}$, with $r=1,...,k$, $l=1,...,n$ and $p=1,...P$, are learned real coefficients.
\end{itemize}
That is to say, the classifier is a reduced kernel-SVM on the space $V^D$ with the kernel $\K$, where the train and test instances are known to lie in the subset $\X\subset V^D$ (i.e. they do not contain any null characters), and there are $n\cdot P$ support-vectors indexed by $(l,p)\in\{1,...,n\}\times\{1,...,P\}$, that are subject to the locality and sharing constraints in eqn.~\ref{eqn:patch_ksvm_local_constr} and eqn.~\ref{eqn:patch_ksvm_share_constr} respectively.  This construction, which we refer to as ``patch-based kernel-SVM'', underlines the strong connection between SimNets and kernel machines.  In particular, it demonstrates the effect of locality, sharing and pooling in SimNets on the kernel-SVM equivalent.  Namely, while the basic SimNet (illustrated in fig.~\ref{fig:layers_nets}(e)) was associated with standard reduced kernel-SVM, adding locality, sharing and pooling to obtain the SimNet considered here (illustrated in fig.~\ref{fig:layers_nets}(f)), translates the associated kernel machine to patch-based kernel-SVM, in which the concepts of locality, sharing and pooling come into play as constraints on the support-vectors.

\end{document}